\documentclass[sigconf]{acmart}

\usepackage{graphicx} % Required for inserting images
\usepackage{hyperref}
\usepackage{xcolor} 
\usepackage[most]{tcolorbox}
\usepackage{subcaption}
\usepackage{amsmath}
\usepackage{comment}

\usepackage{amssymb}
\usepackage{float}
\usepackage{cleveref}
\usepackage{dblfloatfix}
\usepackage{multirow}
\usepackage{geometry} 
\usepackage{caption} 
\usepackage{colortbl} 
\usepackage{array}
\usepackage{booktabs} 
\usepackage[subtle]{savetrees}
\usepackage{enumitem}
\usepackage{colortbl} 
\usepackage{makecell}
\usepackage{pifont}
\newcommand{\cmark}{\textcolor{green!60!black}{\ding{51}}}
\newcommand{\xmark}{\textcolor{red!60!black}{\ding{55}}}
\usepackage{tcolorbox}
\tcbuselibrary{listings, skins}
\usepackage{listings}

\usepackage{subcaption} % Required in your preamble
\AtBeginDocument{%
  }

\theoremstyle{definition}

\theoremstyle{plain}
\crefname{assumption}{Assumption}{Assumptions}
\crefname{remark}{Remark}{Remarks}
\crefname{lemma}{Lemma}{Lemmas}
\crefname{proposition}{Proposition}{Propositions}

\setcopyright{acmlicensed}
\copyrightyear{2026}
\acmYear{2026}
\setcopyright{cc}
\setcctype{by}
\acmConference[KDD '26]{Proceedings of the 32nd ACM SIGKDD Conference on Knowledge Discovery and Data Mining V.2}{August 09--13, 2026}{Jeju Island, Republic of Korea}
\acmBooktitle{Proceedings of the 32nd ACM SIGKDD Conference on Knowledge Discovery and Data Mining V.2 (KDD '26), August 09--13, 2026, Jeju Island, Republic of Korea}
\acmDOI{10.1145/3770855.3818097}
\acmISBN{979-8-4007-2259-2/2026/08}

\begin{document}

%%
%% The "title" command has an optional parameter,
%% allowing the author to define a "short title" to be used in page headers.
\title{Formalizing and Mitigating Structural Distortion in LLM Attention for Graph Reasoning}

%%
%% The "author" command and its associated commands are used to define
%% the authors and their affiliations.
%% Of note is the shared affiliation of the first two authors, and the
%% "authornote" and "authornotemark" commands
%% used to denote shared contribution to the research.

\author{Donald Loveland}
\authornote{Work partially completed during Amazon internship. Now at Snap Inc.}
\affiliation{%
  \institution{University of Michigan}
  \city{Ann Arbor}
  \state{MI}
  \country{USA}}
\email{dlovelan@umich.edu}

\author{Puja Trivedi}
\affiliation{%
  \institution{Amazon}
  \city{Palo Alto}
  \state{CA}
  \country{USA}}
\email{pujatriv@amazon.com}

\author{Ari Weinstein}
\authornote{Now at Ellison Institute of Technology Oxford}
\affiliation{%
  \institution{Amazon}
  \city{Palo Alto}
  \state{CA}
  \country{USA}}
\email{arielwei@amazon.com}

\author{Edward W Huang}
\affiliation{%
  \institution{Amazon}
  \city{Palo Alto}
  \state{CA}
  \country{USA}}
\email{ewhuang@amazon.com}

\author{Danai Koutra}
\affiliation{%
  \institution{University of Michigan}
  \city{Ann Arbor}
  \state{MI}
  \country{USA}}
\email{dkoutra@umich.edu}

%%
%% By default, the full list of authors will be used in the page
%% headers. Often, this list is too long, and will overlap
%% other information printed in the page headers. This command allows
%% the author to define a more concise list
%% of authors' names for this purpose.
\renewcommand{\shortauthors}{Donald Loveland, Puja Trivedi, Ari Weinstein, Edward W Huang, and Danai Koutra}

%%
%% The abstract is a short summary of the work to be presented in the
%% article.
\begin{abstract}

Large Language Models (LLMs) have shown promise for reasoning over Text-Attributed Graphs (TAGs). However, applying LLMs to graphs requires linearizing their structure into sequences, introducing distortion rooted in the graph bandwidth problem. While this distortion has been shown to degrade performance, it is often attributed to prompt design or model scale, leaving the underlying mechanism unclear.
In this work, we show \textit{how} rotary positional embeddings turn graph linearization into bandwidth-dependent attention decay, suppressing attention between graph-adjacent nodes that are forced far apart in the serialized sequence. This shifts the focus of LLM-based graph reasoning from prompt engineering and scaling toward correcting attention misalignment.
Motivated by this analysis, we propose \textbf{G}raph-\textbf{a}ligned \textbf{L}anguage \textbf{A}ttention (\textbf{GaLA}), a lightweight, inference-time modification for LLMs. GaLA biases attention toward graph-adjacent nodes while preserving the LLM's sequential inductive biases. Across TAG benchmarks, GaLA improves performance with negligible overhead, demonstrating that distortion is a correctable bottleneck in LLM-based graph reasoning.
 
%Large Language Models (LLMs) have recently shown promise for directly reasoning over Text-Attributed Graphs (TAGs) by leveraging their vast pretrained knowledge. However, applying LLMs to graph data requires linearizing their structure into token sequences, introducing distortion rooted in the graph bandwidth problem. While this distortion is known to degrade performance, it is often attributed to prompt design or model scale, leaving the underlying mechanism unclear. In this work, we characterize \textit{how} performance degradation arises due to the LLM’s positional biases (e.g. Rotary Positional Embeddings), suppressing attention between graph-adjacent nodes when they are far apart in the sequence, even when sufficient semantic information is present. This insight shifts the focus of LLM-based graph reasoning from prompt engineering and scaling to intervening on the attention mechanism. Motivated by this, we propose \textbf{G}raph-\textbf{a}ligned  \textbf{L}anguage \textbf{A}ttention (\textbf{GaLA}), a lightweight, inference-time attention modification that softly injects graph structure into LLM. GaLA biases the LLM's attention toward the graph structure while preserving sequential inductive biases, effectively interpolating between LLMs and graph transformers without training. Across multiple TAG benchmarks, GaLA improves zero-shot performance with minimal overhead, demonstrating that improving attention alignment is valuable for LLM-based graph reasoning.
\end{abstract}

\begin{CCSXML}
<ccs2012>
<concept>
       <concept_id>10010147.10010178.10010187</concept_id>
       <concept_desc>Computing methodologies~Knowledge representation and reasoning</concept_desc>
       <concept_significance>500</concept_significance>
       </concept>
   <concept>
       <concept_id>10010147.10010178.10010179</concept_id>
       <concept_desc>Computing methodologies~Natural language processing</concept_desc>
       <concept_significance>300</concept_significance>
       </concept>
 </ccs2012>
\end{CCSXML}

\ccsdesc[500]{Computing methodologies~Knowledge representation and reasoning}
\ccsdesc[300]{Computing methodologies~Natural language processing}

%%
%% Keywords. The author(s) should pick words that accurately describe
%% the work being presented. Separate the keywords with commas.
\keywords{Large Language Models, Text-Attributed Graphs, Attention}

% \received{20 February 2007}
% \received[revised]{12 March 2009}
% \received[accepted]{5 June 2009}

%%
%% This command processes the author and affiliation and title
%% information and builds the first part of the formatted document.
\maketitle

\section{Introduction}

Large Language Models (LLMs) have transformed natural language processing (NLP), achieving strong performance across sentiment analysis \citep{zhang2023sentimentanalysiseralarge}, summarization \citep{liu2024learningsummarizelargelanguage, Ravaut2023OnCU, Jiang2024TriSumLS}, and multi-step reasoning~\citep{Zhao_2024, YU2024100076, huang2024understandingplanningllmagents}. Much of this success, however, is predicated on benchmarks and applications that present information as linear sequences, aligning naturally with the models' pretrained assumptions~\citep{vaswani2023attentionneed, goldberg2015primerneuralnetworkmodels, ivanov2024aibenchmarksdatasetsllm}. In contrast, many real-world domains provide additional context through relationships between texts, forming text-attributed graphs (TAGs) \citep{seo2024unleashingpotentialtextattributedgraphs}. Some examples include web pages with hyperlinks~\citep{BRODER2000309}, co-purchase networks \citep{1167344}, and citation graphs \citep{newman_soc}, where the added structure provides useful signal for downstream prediction.

Motivated by the potential value of these relationships, recent work has adopted the \textit{LLM-as-Predictor} paradigm, where a graph is serialized into a prompt to enable reasoning over raw text~\citep{chen2024exploringpotentiallargelanguage, wang2025exploringgraphtaskspure, fatemi2023talklikegraphencoding, lei2025exploringpotentiallargelanguage,WangRPAKDD25}. This approach is conceptually appealing as it avoids task-specific graph encoders while leveraging the LLM's pretrained knowledge. However, empirical studies have shown that off-the-shelf LLMs often struggle to outperform simple structure-aware baselines in this setting~\citep{chen2024exploringpotentiallargelanguage, loveland2025glancecontextlearningleverage, wang2025exploringgraphtaskspure}. This raises a basic mechanistic question: why does graph information that is presented in the prompt fail to be used effectively by the model?
While prior work has found a link between these performance degradations and structural distortion, induced by linearizing the graph into a prompt \citep{pmlr-v48-niepert16, cuthill_mckee, RENDL2021105422}, the precise mechanism linking this distortion to model failure remains underexplored. 
Instead, degraded performance has largely been approached by addressing factors such as prompting, model scale, or supervised fine-tuning \citep{kazemnejad2023impactpositionalencodinglength, firooz2025lostindistanceimpactcontextualproximity, wang2025exploringgraphtaskspure}. We argue that while these factors are relevant, they primarily describe \textit{when} performance degrades, not \textit{why} LLMs struggle with graph reasoning. Without this insight, future LLM-as-Predictor methods are limited to indirect adjustments that manipulate input conditions, rather than resolving the underlying mismatch between graph topology and sequence-based attention.

Bridging this gap requires  characterizing how linearization interacts with the LLM. Specifically, we hypothesize that a key failure mode is rooted in the inductive biases of Rotary Positional Embeddings (RoPE) \citep{su2023roformerenhancedtransformerrotary, barbero2025roundroundgomakes}. When a graph neighborhood is serialized, RoPE attenuates attention between graph-adjacent but sequentially distant nodes, even when sufficient semantic information is present. \Cref{fig:teaser_fig} illustrates this intuition as graph neighbors that should interact strongly can be rotated far apart given their distance in the prompt. Importantly, this hypothesis reframes long-context degradation as connectivity distortion, shifting the focus from optimizing prompts or model scale to resolving a geometric misalignment between graph topology and attention. This motivates our central question:

\begin{figure}[t]
    \centering    \includegraphics[width=\linewidth]{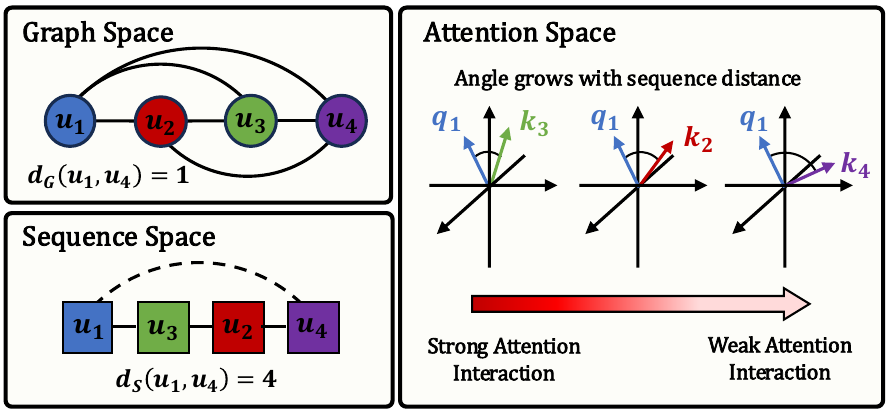}
    \Description{A visual diagram comparing three states: graph distance showing equidistant nodes, linearization showing nodes separated by gaps, and attention space showing these gaps converted into angles by RoPE.}
    \caption{Under graph distance, nodes are all equidistant. With linearization, graph-adjacent nodes are separated by positional gaps. In attention space, RoPE converts these gaps into angle differences, misrepresenting the underlying structure.} 
    \label{fig:teaser_fig}
\end{figure}

\begin{center}
\textbf{\textit{What encodes sequence-induced distortion in LLMs, and how can this finding encourage graph-aware LLM inference?}}
\end{center}
To address this, we first theoretically characterize how RoPE converts graph linearization into attention decay, and demonstrate how graph bandwidth can make this decay unavoidable. We validate this mechanism using a controlled aggregation task, where incorrect predictions correlate with attention attenuation. Furthermore, we demonstrate that strategies such as improved serialization, model scaling, and fine-tuning tend to succeed by counteracting this decay.

Building on these insights, we propose \textbf{G}raph-\textbf{a}ligned \textbf{L}anguage \textbf{A}ttention, \textbf{GaLA}, a lightweight, inference-time intervention that softly injects graph structure into the LLM's attention computation. Unlike methods that enforce hard graph constraints or introduce external graph encoders, GaLA applies a graph-based attention bias to frozen LLMs during inference. This intervention encourages attention between graph-adjacent nodes while preserving the model's pretrained sequential language-processing inductive biases. Across multiple TAG tasks, GaLA improves node classification with minimal overhead, improving both semi-synthetic and real-world graph reasoning by up to +18.6\% and +5.0\%, respectively. Together, our findings establish a new perspective on the LLM-as-Predictor paradigm focused on aligning the LLM's attention with the topology of the graph, rather than indirectly compensating for misalignment.

\begin{itemize}[leftmargin=*, labelsep=0.5em]
    \item \textbf{Formalizing Distortion.}  
    We formalize how linearization-based distortion modulates attention for graph inputs, suppressing interactions between graph-adjacent but sequence-distant nodes.

    \item \textbf{Limitations of Existing Strategies.}  
    We show that improved serialization, model scaling, and fine-tuning yield gains by mitigating attention attenuation, revealing a shared underlying mechanism.

\item \textbf{GaLA: Graph-aligned Language Attention.}
We propose GaLA, a soft, structure-aware attention modification that aligns LLM attention with graph topology while keeping model weights fixed. GaLA uses a cheap, one-time pre-inference calibration to select which heads receive the graph bias, either via a label-free entropy heuristic or, when calibration labels are available, a one-time gradient score. After this single step, the resulting head weights are fixed and GaLA runs entirely as an inference-time attention modification.

   \item \textbf{Improved LLM Performance.}  
Across semi-synthetic and real-world node classification tasks, GaLA yields gains of up to +18.6\% and +5.0\%, respectively, showing that inference-time attention alignment can improve graph reasoning with LLMs.
\end{itemize}

\section{Related Work} 

Recent work has explored approaches to allow LLMs to reason over TAGs. Broadly, these fall into: (i) prompting, (ii) training, (iii) hybrid systems and (iv) diagnostic analyses. We summarize these below. 

\vspace{0.1cm}
\noindent \textbf{Prompt-based Graph Reasoning.}
The LLM-as-Predictor paradigm serializes a node’s neighborhood into a prompt and queries an LLM for node or edge predictions \citep{chen2024exploringpotentiallargelanguage, wang2025exploringgraphtaskspure,He00CK25}. Prior work studies how to best expose structure via orderings and prompt templates, including neighborhood ordering heuristics \citep{chen2024llagalargelanguagegraph, xypolopoulos2025graphlinearizationmethodsreasoning} and structured prompting schemes such as GraphICL \citep{sun2025graphiclunlockinggraphlearning}. Complementary work analyzes how different graph-to-text encodings affect performance, showing large sensitivity to representation choice and task type \citep{fatemi2023talklikegraphencoding}. Subsequent work has focused on proposing serializations which are more faithful to the graph structure (e.g., tree-like encodings) for transformer reasoning \citep{zhao2023graphtextgraphreasoningtext}. While appealing for zero- or few-shot use, these methods remain fundamentally constrained by linearization: dense neighborhoods inevitably induce large separations between adjacent nodes, making outcomes brittle to ordering and contextual proximity \citep{firooz2025lostindistanceimpactcontextualproximity}.

\vspace{0.1cm}
\noindent \textbf{Training Adaptations.}
A second line of work improves structural reasoning by adapting pretrained LLMs using supervision or self-supervision on graph tasks \cite{he2024harnessingexplanationsllmtolminterpreter, zhang2025rethinkinggraphstructurelearning}. This includes instruction-tuning LLMs on natural-language graph descriptions \citep{ye2024languagegraphneeds, kim2023kggptgeneralframeworkreasoning}, graph-specific instruction tuning and alignment objectives \citep{tang2024graphgptgraphinstructiontuning}, and lightweight adapters or projection layers that map graph inputs into the LLM token space while often keeping most weights frozen \citep{chen2024llagalargelanguagegraph, hu2021loralowrankadaptationlarge}. These approaches can improve task performance, but require additional compute and may conflate genuine structural alignment with task-specific learning signals, making it less clear what mechanism in the pretrained model was limiting graph reasoning in the first place \citep{wang2025exploringgraphtaskspure}.

\vspace{0.1cm}
\noindent \textbf{Hybrid GNN-LLMs.}
Hybrid systems combine graph encoders with LLMs, using GNNs to compute structural representations that are injected via soft prompts, extra tokens, hidden-state conditioning, or retrieval pipelines \cite{tang2024graphgptgraphinstructiontuning, he2024harnessingexplanationsllmtolminterpreter, hu2025graggraphretrievalaugmentedgeneration,MaTK26}. Examples include using a GNN-derived prefix or soft prompt to steer a frozen LLM \citep{Liu_2024}, aligning graph-model embeddings to an LLM for open-ended prediction and explanation~\citep{zhang2024graphtranslatoraligninggraphmodel}, or using graph-aware retrieval to select in-context examples~\citep{hu2025letsaskgnnempowering}. These methods often perform well and scale better than raw serialization, but they effectively relocate structural reasoning to external components, reintroducing graph-specific inductive bias and reducing the appeal of LLM-native graph reasoning \citep{you2024largelanguagemodelsmeet}.

\vspace{0.1cm}
\noindent \textbf{Diagnostic Analyses.}
Separately, work on LLMs has found strong proximity biases, ordering sensitivities, and positional-encoding effects in long or non-linear contexts \citep{kazemnejad2023impactpositionalencodinglength, barbero2025roundroundgomakes}. In graph settings, multiple studies report that performance varies sharply with node ordering and contextual proximity \citep{chen2024llagalargelanguagegraph, sun2025graphiclunlockinggraphlearning, firooz2025lostindistanceimpactcontextualproximity}. Some of the most relevant work to ours is recent attention-focused analyses showing that LLM-based transformers may fail to capture inter-node relationships when structure is presented only through sequences \citep{guan2025attentionmechanismsperspectiveexploring}. These studies primarily characterize how attention distributions diverge from graph topology and propose training-time strategies to improve performance. 

\vspace{0.1cm}
\noindent \textbf{Our Work.}
In contrast to prior work, we investigate the underlying mechanism in LLMs by which linearization affects attention. We then demonstrate that this mechanism plays an important role in limiting reasoning, and use this understanding to develop a lightweight intervention that better aligns attention with graph topology while preserving the LLM’s sequential inductive biases.

\section{Preliminaries} %Definitions and Notation}

\textbf{Graphs.} Let $G=(V, E, X)$ be a TAG, where $V$ is the set of nodes, $E$ is the set of edges, and $X$ is the text for each node. Within each node's text attribute is a token sequence $x_i=[t_{i,1},t_{i,2},\dots,t_{i,l_i}]$ with length $l_i$. We denote $G$'s adjacency matrix as $\mathbf{A}\in\{0,1\}^{|V|\times |V|}$, where $\mathbf{A}_{ij}=1$ if $(v_i,v_j)\in E$. The $k$-hop ego-graph for a node $v$ is given by $\mathcal{N}_k(v)$, i.e., the subgraph containing all nodes and edges within $k$ hops of $v$. 

\vspace{0.1cm}
\noindent \textit{Graph Linearization.}
Linearization converts a graph into a token sequence through a mapping $S=L(G)$. This process first induces a node ordering $\pi_L:V\to\{1,\ldots,|V|\}$, after which node texts and structural markers are serialized into a string. The traversal strategy used to determine $\pi_L$ (e.g., BFS or DFS) is important for preserving graph relationships. For node classification, we generate a sequence $s_v$ for each target node $v$ by applying this linearization to its local $k$-hop neighborhood, i.e. $s_v=L(\mathcal{N}_k(v))$.

\vspace{0.1cm}
\noindent \textit{Edge Stretch Induced by Linearization.}
Linearization $L$ induces distortion by mapping graph-adjacent nodes to distant token positions. For each node $u$, let $\mathcal{T}_L(u)$ be the set of token positions occupied by $u$'s text in the serialized sequence, and let $p_u^L$ denote a representative position, such as the first token of $u$. We define the token-level edge stretch of $(u,v)\in E$ under $L$ as
\(
\delta^L_{u,v}=|p_u^L-p_v^L|.
\)
When the linearization is clearly stated, we write $\delta_{u,v}$ for brevity. Bandwidth theory implies that for non-trivial graph topologies, no ordering can preserve all adjacencies, making significant edge stretch unavoidable. Prior work typically attempts to mitigate this through ordering heuristics \citep{chen2024llagalargelanguagegraph, sun2025graphiclunlockinggraphlearning}, treating the LLM as a black box.

\vspace{0.1cm}
\noindent \textbf{Models. }Graph learning typically relies on graph neural networks (GNNs) \cite{kipf2017semisupervisedclassificationgraphconvolutional, hamilton2018inductiverepresentationlearninglarge}. GNNs obtain representations by aggregating information from the neighborhood around a node:

\vspace{-0.3cm}
{\small
\begin{equation*}
\mathbf{z}_i^{(k)} = \text{COMBINE}\left(\mathbf{z}_i^{(k-1)}, \text{AGGREGATE}\left(\{\mathbf{z}_j^{(k-1)} : v_j \in \mathcal{N}_1(v_i)\}\right)\right),
\end{equation*}
}
where $\mathbf{z}_{i}^{0}$ is an initial feature vector. For TAGs, this is typically a fixed text embedding derived from a method such as Word2Vec \cite{mikolov2013efficientestimationwordrepresentations} or Embedding LLMs \cite{zhang2025qwen3embeddingadvancingtext}. Importantly, the edges within each $\mathcal{N}_k(v)$ explicitly govern the flow of information. While computationally efficient, this design imposes rigid inductive biases \cite{loveland2023performancediscrepancieslocalhomophily,yan2022sidescoinheterophilyoversmoothing}, relies on fixed initial text embeddings, and cannot be applied zero-shot.
The \textit{LLM-as-Predictor} paradigm relaxes these constraints by directly leveraging frozen, pretrained LLMs \cite{wang2025exploringgraphtaskspure}. Instead of deriving embeddings via explicit message passing, this approach serializes the node's neighborhood into a prompt. Conceptually, this substitutes the GNN’s aggregation with the LLM’s self-attention, enabling flexible reasoning without the need for task-specific training.

\vspace{0.1cm}
\noindent \textbf{Positional Encodings.}
In decoder-based architectures, sequential order is captured via positional encodings. A common choice is RoPE, which applies position-dependent rotations to queries and keys~\citep{su2023roformerenhancedtransformerrotary, barbero2025roundroundgomakes}. Formally, for head dimension $d$ and $N=d/2$ two-dimensional subspaces, RoPE uses frequencies $\theta_m=b^{-m/N}=b^{-2m/d}$ for $m=0,\ldots,N-1$, where $b$ is the RoPE base. At sequence position $p$, the $m$-th subspace is rotated by angle $p\theta_m$.
For tokens at positions $p_i$ and $p_{i'}$, their relative rotation in subspace $m$ depends only on the positional gap $\Delta p=p_i-p_{i'}$:
{\small
\begin{equation*}
(\tilde{\mathbf{q}}^{(m)}_i)^\top \tilde{\mathbf{k}}^{(m)}_{i'}
=
(\mathbf{q}^{(m)}_i)^\top
\mathbf{R}_{\Delta p,\theta_m}
\mathbf{k}^{(m)}_{i'},
\qquad
\mathbf{R}_{\Delta p,\theta_m}
=
\mathbf{R}(\Delta p\,\theta_m).
\end{equation*}
}Equivalently, writing 
$\mathbf{q}^{(m)}_i=(q^{(m)}_{i,1},q^{(m)}_{i,2})$ and 
$\mathbf{k}^{(m)}_{i'}=(k^{(m)}_{i',1},k^{(m)}_{i',2})$, and letting 
$\phi_m=(p_{i'}-p_i)\theta_m$, the RoPE-modulated dot product is
{\small
\begin{equation*}
\begin{aligned}
(\tilde{\mathbf{q}}^{(m)}_i)^\top \tilde{\mathbf{k}}^{(m)}_{i'}
&=
\left(
q^{(m)}_{i,1}k^{(m)}_{i',1}
+
q^{(m)}_{i,2}k^{(m)}_{i',2}
\right)\cos(\phi_m) \\
&\quad+
\left(
q^{(m)}_{i,2}k^{(m)}_{i',1}
-
q^{(m)}_{i,1}k^{(m)}_{i',2}
\right)\sin(\phi_m).
\end{aligned}
\end{equation*}
}The first term is the usual query-key alignment modulated by relative position, while the second term captures the perpendicular interaction induced by the rotation. Thus, RoPE makes sequence distance directly affect attention scores through the phase $\phi_m$. When graphs are serialized, adjacent nodes can have large positional gaps $\delta^L_{u,v}$, causing graph-neighbor interactions to be attenuated by the model's sequence-based geometry. In this work, we characterize how this stretch propagates through RoPE and use the result to realign the model's internal geometry with the underlying graph structure.

\section{Mechanisms of Structural Distortion in LLMs}
\label{sec:distortion}

\begin{figure*}[t]
    \centering
    \includegraphics[width=0.86\linewidth]{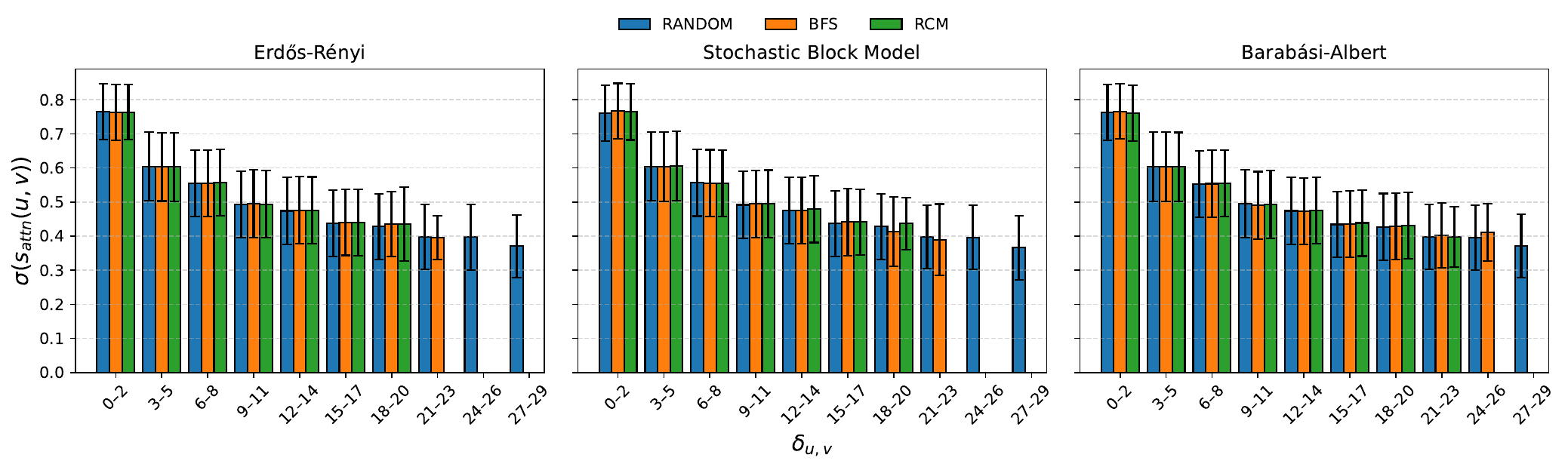}
    \vspace{-0.5cm}
    \Description{A wide, three-panel figure containing side-by-side bar charts for ER, SBM, and BA graphs. Each chart plots attention strength on the vertical axis against the serialization gap on the horizontal axis, with different colored bars representing various linearization methods. Across all three panels, the bars visually demonstrate a steep downward trend as the horizontal value increases.}
    \caption{Each plot reports the attention strength $ \sigma(s_{\mathrm{attn}}(u,v)) $ vs. serialization gap $\delta_{u,v}$ for ER, SBM, and BA graphs for different linearizations (bar colors). Attention decays sharply with increasing sequence gap across all graphs. Crucially, even RCM fails to prevent the large gaps that drive this decay, highlighting the intrinsic distortion of sequence-based serialization.}
    \label{fig:attn_sim}
    \vspace{-0.2cm}
\end{figure*}

In this section, we characterize the link between graph linearization and model behavior. We first define a node-level attention strength that measures how strongly two graph nodes interact inside the LLM's RoPE-modulated attention space. We then show that RoPE's frequency structure converts sequence distance into predictable decay in attention alignment. Finally, we connect this sequence-level decay to graph topology through graph bandwidth, yielding a topology-dependent bound on attention loss under serialization.

\subsection{Formalizing Alignment in Attention Space}

Sequence-level distortion metrics, such as the serialization gap $\delta_{u,v}$, capture how far graph-adjacent nodes are separated in the prompt. However, they do not directly describe how this separation affects the LLM's internal computation. To bridge this gap, we analyze alignment within the LLM's RoPE-modulated attention space. This allows us to characterize the model's internal behavior directly, rather than relying only on downstream performance metrics \citep{firooz2025lostindistanceimpactcontextualproximity, sun2025graphiclunlockinggraphlearning}.

\subsubsection{Evaluating Attention}
To analyze the LLM's internal processing, we first define a \textit{token-level attention score} between query and key tokens at positions $i$ and $j$, respectively. For a set of attention heads $\mathcal{H}$, let $\tilde{\mathbf{q}}^{(h)}_i$ and $\tilde{\mathbf{k}}^{(h)}_j$ denote the RoPE-modulated query and key vectors, and let $d$ denote the head dimension. The token-level score is the average pre-softmax logit across heads:
{\small
\begin{equation*}
s(i,j) \;=\; \frac{1}{|\mathcal{H}|} \sum_{h \in \mathcal{H}} \frac{\tilde{\mathbf{q}}^{(h)\top}_i\tilde{\mathbf{k}}^{(h)}_j}{\sqrt{d}}.
\end{equation*}
}
We then define \textit{Node Attention Similarity} by aggregating over node-specific tokens. Let $\mathcal{T}(u)$ and $\mathcal{T}(v)$ denote the token indices corresponding to the text of nodes $u$ and $v$. The node-level attention similarity is the mean score between all token pairs:

{\small
\begin{equation*}
s_{\mathrm{attn}}(u,v) \;=\; \frac{1}{|\mathcal{T}(u)|\,|\mathcal{T}(v)|} \sum_{i \in \mathcal{T}(u)} \sum_{j \in \mathcal{T}(v)} s(i,j).
\end{equation*}
}

\noindent \textbf{Intuition.} This metric serves as a proxy for the effective connectivity between nodes within the LLM's internal geometry. Since RoPE encodes position through rotations, large separations in the prompt induce phase shifts that systematically degrade dot-product alignment. As a result, $s_{\mathrm{attn}}(u,v)$ can decrease even when $u$ and $v$ are graph-adjacent and semantically related.

\subsection{Theoretical Characterization of RoPE's Impact on Attention under Linearization}
\label{sec:theory}

We now characterize how RoPE transforms sequence separation into attention decay. We begin with a pair of tokens separated by $\delta$ positions; later, for graph nodes $u$ and $v$, this token distance is instantiated by the edge stretch $\delta^L_{u,v}$ induced by linearization $L$. The node-level score above follows by averaging this effect over token pairs from two node texts. To isolate the role of positional encoding, we model pre-RoPE query and key representations as random variables with fixed semantic alignment, where semantic alignment captures similarity arising from node content before positional rotation.

\vspace{0.1cm}
\noindent \textbf{Setup.}
Consider a single layer and attention head with $N$ rotary subspaces. In the $m$-th subspace, RoPE uses frequency $\theta_m=b^{-m/N}$ for $m=0,\ldots,N-1$, where $b$ is the RoPE base. Let $\mathbf{q}^{(m)}_i,\mathbf{k}^{(m)}_j\in\mathbb{R}^2$ denote the pre-RoPE query and key components for tokens at positions $i$ and $j$. We define the pre-RoPE semantic alignment in subspace $m$ as
\[
\rho_m := \mathbb{E}\!\left[(\mathbf{q}^{(m)}_i)^\top \mathbf{k}^{(m)}_j\right],
\]
and let $\rho_{\mathrm{total}}:=\sum_{m=0}^{N-1}\rho_m$ denote the total expected pre-RoPE dot product across subspaces. We place no constraint on the in-subspace orientation of these representations; \Cref{app:thm1_proof} shows the rotation average depends only on $\rho_m$, up to a distance-independent skew constant.

RoPE rotates query and key vectors according to their positions. Therefore, the expected post-RoPE contribution of subspace $m$ depends on both the semantic alignment $\rho_m$ and the modulation $\cos(\delta\theta_m)$, where $\delta=|i-j|$. This gives the normalized discrete retention
\begin{equation}
r_{\mathrm{disc}}(\delta)
:=
\frac{\sum_{m=0}^{N-1}\rho_m \cos(\delta\theta_m)}
{\sum_{m=0}^{N-1}\rho_m}.
\label{eq:retention_discrete}
\end{equation}
In the continuum limit over rotary subspaces, the discrete frequencies $\{\theta_m\}_{m=0}^{N-1}$ are represented by a density $g(\theta)$, and the subspace alignments $\rho_m$ become a smooth function $\rho(\theta)$. Thus, the sums in \Cref{eq:retention_discrete} become integrals over the RoPE frequency range:
\begin{equation}
r(\delta)
=
\frac{\int_{1/b}^{1} \rho(\theta)g(\theta)\cos(\delta\theta)\,d\theta}
{\int_{1/b}^{1} \rho(\theta)g(\theta)\,d\theta}.
\label{eq:retention_general}
\end{equation}
Here, $g(\theta)$ is the architecture-induced frequency density, while $\rho(\theta)$ is the model- and input-dependent semantic alignment at $\theta$. Since RoPE uses $\theta_m=b^{-m/N}$, the change of variables $m=-N\log_b\theta$ gives
\begin{equation}
g_{\mathrm{RoPE}}(\theta)
=
\frac{1}{\theta\ln b},
\qquad
\theta\in[1/b,1].
\label{eq:rope_frequency_density}
\end{equation}

\begin{theorem}[Expected RoPE Attention under Edge Stretch]
\label{thm:rope_edge_distortion}
Assume semantic alignment is approximately uniform across rotary subspaces, i.e., $\rho_m\approx\rho_{\mathrm{total}}/N$ with $\rho_{\mathrm{total}}>0$. Then, for two token positions separated by $\delta$, the expected pre-softmax attention score is
\begin{equation}
\begin{aligned}
\mathbb{E}[s(i,j)]
&\approx
\frac{\rho_{\mathrm{total}}}{\sqrt{d}}\,r_b(\delta), \\
r_b(\delta)
&:=
\frac{1}{\ln b}
\int_{1/b}^{1}\frac{\cos(\delta\theta)}{\theta}\,d\theta
=
\frac{\mathrm{Ci}(\delta)-\mathrm{Ci}(\delta/b)}{\ln b},
\end{aligned}
\label{eq:expected_attention_log}
\end{equation}
where $r_b(\delta)$ is the normalized RoPE retention factor and $\mathrm{Ci}(\cdot)$ is the cosine integral. In the long-range regime $1\ll \delta \ll b$, this retention factor is approximated by
\begin{equation}
r_b(\delta)
\approx
\bar r_b(\delta)
:=
1-\frac{\ln\delta+\gamma}{\ln b},
\label{eq:retention_envelope}
\end{equation}
where $\gamma$ is the Euler--Mascheroni constant.
\end{theorem}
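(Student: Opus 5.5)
We compute the expectation of the RoPE-modulated logit one rotary subspace at a time, then sum, pass to the continuum, and finally expand the cosine integral.

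\textbf{Step 1: the expectation in a single subspace.} By the RoPE identity in the preliminaries, the contribution of subspace $m$ is
$(\mathbf{q}^{(m)}_i)^\top\mathbf{k}^{(m)}_j\cos(\phi_m)+(q^{(m)}_{i,2}k^{(m)}_{j,1}-q^{(m)}_{i,1}k^{(m)}_{j,2})\sin(\phi_m)$.
The phase $\phi_m$ is deterministic given $\delta$, so taking expectations gives
\[
\rho_m\cos(\delta\theta_m)\pm\kappa_m\sin(\delta\theta_m),
\]
where $\kappa_m$ is the expected skew (cross) term.

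We then argue that the skew contribution is negligible. The orientation is unconstrained, so we either assume it is isotropic, which gives $\kappa_m=0$, or we note that $\sum_m\kappa_m\sin(\delta\theta_m)$ is, in the continuum, $\int\sin(\delta\theta)/\theta\,d\theta$. This equals $\mathrm{Si}(\delta)-\mathrm{Si}(\delta/b)$, which is bounded and tends to a distance-independent constant $\pi/2$ once $\delta\gg1$; this is the ``skew constant'' the statement refers to. Summing over $m$ and dividing by $\sqrt d$ gives
\[
\mathbb{E}[s(i,j)]=\frac{\rho_{\mathrm{total}}}{\sqrt d}\,r_{\mathrm{disc}}(\delta).
\]

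\textbf{Step 2: the continuum limit.} Under the uniformity assumption $\rho_m=\rho_{\mathrm{total}}/N$, the ratio \eqref{eq:retention_discrete} becomes $\frac1N\sum_m\cos(\delta b^{-m/N})$. This is a Riemann sum in the variable $t=m/N\in[0,1)$, so it converges to $\int_0^1\cos(\delta b^{-t})\,dt$. Substituting $\theta=b^{-t}$, so that $dt=-d\theta/(\theta\ln b)$, yields exactly the density \eqref{eq:rope_frequency_density} and hence
\[
r_b(\delta)=\frac{1}{\ln b}\int_{1/b}^1\frac{\cos(\delta\theta)}{\theta}\,d\theta .
\]
The ``$\approx$'' in the statement absorbs the Riemann-sum error. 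That error is $O(\delta/N)$ for smooth integrands, and it is the one place where large $\delta$ relative to $N$ makes the approximation coarse. We state this as the discretization caveat rather than bounding it sharply.

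Next, substitute $x=\delta\theta$:
\[
\int_{1/b}^1\frac{\cos(\delta\theta)}{\theta}\,d\theta=\int_{\delta/b}^{\delta}\frac{\cos x}{x}\,dx=\mathrm{Ci}(\delta)-\mathrm{Ci}(\delta/b),
\]
using $\mathrm{Ci}(z)=-\int_z^\infty\cos x/x\,dx$.

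\textbf{Step 3: the long-range envelope.} In the regime $1\ll\delta\ll b$, we use two standard expansions:
\begin{itemize}
\item small argument: $\mathrm{Ci}(z)=\gamma+\ln z+O(z^2)$, applied at $z=\delta/b\ll1$, which gives $\mathrm{Ci}(\delta/b)\approx\gamma+\ln\delta-\ln b$;
\item large argument: $\mathrm{Ci}(\delta)=\sin\delta/\delta+O(\delta^{-2})$, which is $O(1/\delta)$ and therefore negligible for $\delta\gg1$.
\end{itemize}
Hence
\[
r_b(\delta)\approx\frac{\ln b-\ln\delta-\gamma}{\ln b}=\bar r_b(\delta).
\]
The dropped term $\mathrm{Ci}(\delta)/\ln b$ is an oscillatory correction around this envelope.

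\textbf{Main obstacle.} The subtle point is Step 1's treatment of the sine/skew term and the justification for replacing expectations of products by $\rho_m$. The cleanest route is to declare it a distance-independent constant in the long-range regime via the $\mathrm{Si}$ asymptotics, or to assume rotational isotropy so that it vanishes. The remaining steps are routine calculus.
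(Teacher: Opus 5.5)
Your proposal is correct and follows essentially the same route as the paper's proof. The shared steps are:
\begin{enumerate}
\item split the per-subspace expectation into an alignment term $\rho_m\cos(\delta\theta_m)$ plus a skew $\sin$ term (the paper writes this as $\operatorname{Tr}(\mathbf{R}_{-\delta,\theta_m}\boldsymbol{\Sigma}_m)$ with $c_m=(\boldsymbol{\Sigma}_m)_{21}-(\boldsymbol{\Sigma}_m)_{12}$);
\item set the skew term aside via the $\mathrm{Si}(\delta)-\mathrm{Si}(\delta/b)\to\pi/2$ argument;
\item pass from the Riemann sum in $x=m/N$ to the log-uniform frequency density;
\item substitute $u=\delta\theta$ to obtain the $\mathrm{Ci}$ difference;
\item expand $\mathrm{Ci}(\delta/b)$ for small argument and drop the oscillatory residual $\mathrm{Ci}(\delta)/\ln b$.
\end{enumerate}
Your $O(\delta/N)$ discretization caveat and the isotropy option ($\kappa_m=0$) are small but honest additions beyond the paper's bare $N\to\infty$ limit; note that the $\mathrm{Si}$ route, in both your version and the paper's, also tacitly assumes the skew coefficients are uniform across subspaces.
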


\begin{proof}
The proof is provided in \Cref{app:thm1_proof}.
\end{proof}

\noindent \textbf{Intuition.}
\Cref{thm:rope_edge_distortion} shows that RoPE converts token distance into a predictable reduction in attention alignment. The term $r_b(\delta)$ in \Cref{eq:expected_attention_log} is the normalized retention factor induced by RoPE, and \Cref{eq:retention_envelope} shows that it decreases approximately logarithmically with token distance. Thus, even when two graph-adjacent nodes are semantically aligned before positional encoding, placing them far apart in the serialized prompt can reduce their expected attention through RoPE-induced phase mismatch.
This logarithmic form follows directly from RoPE's log-spaced frequencies. For a gap $\delta$, subspaces with $\delta\theta_m \lesssim 1$ remain nearly position-insensitive as their cosine modulation stays close to one, while subspaces with $\delta\theta_m \gtrsim 1$ contribute more strongly to decay. Since RoPE frequencies are log-spaced, the fraction of such active subspaces grows approximately as $\ln\delta/\ln b$, yielding the decay rate in \Cref{eq:retention_envelope}.

\subsection{Extension to Graph Topology via Bandwidth}
\label{sec:bandwidth_bound}

The preceding result characterizes attention decay as a function of token distance $\delta$. We now connect this sequence-level effect to graph topology by substituting the worst-case token stretch forced by graph bandwidth into the same retention envelope $\bar r_b(\delta)$.

\begin{definition}[Graph Bandwidth]
For a graph $G=(V,E,X)$, its bandwidth is the smallest possible worst-case edge stretch over node orderings $\pi$, denoted as
\[
B(G):=\min_{\pi:V\to\{1,\ldots,|V|\}}\max_{(u,v)\in E}|\pi(u)-\pi(v)|.
\]
\label{def:graph_bandwidth}
\end{definition}

\begin{corollary}[Bandwidth-Limited Attention Retention]
\label{cor:bandwidth_retention}
Assume each node has approximately $\tau$ tokens. Let $\delta^L_{u,v}$ be the token-level edge stretch under linearization $L$, and define the best possible worst-edge smooth retention as
$\bar r_b^*(G):=\max_L\min_{(u,v)\in E}\bar r_b(\delta^L_{u,v})$.
In the regime $1\ll \tau B(G)\ll b$,
\begin{equation}
\bar r_b^*(G)
\lesssim
\bar r_b(\tau B(G))
=
1
-
\frac{\ln B(G)+\ln\tau+\gamma}{\ln b}.
\label{eq:bandwidth_retention_bound}
\end{equation}
\end{corollary}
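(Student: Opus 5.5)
The argument has three parts. First I show that any linearization must stretch some edge by about $\tau B(G)$ tokens. Then I use the fact that the envelope $\bar r_b$ is decreasing, so the worst edge's retention is controlled by the largest stretch. Finally I take the maximum over linearizations.

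\textbf{Step 1: from token stretch to bandwidth.} Fix any linearization $L$ and let $\pi_L$ be the node ordering it induces. Each node occupies about $\tau$ consecutive tokens. The representative position $p_u^L$ (say, the first token of $u$) is therefore approximately $\tau\,\pi_L(u)$ up to a fixed offset, ignoring structural markers, which can only increase gaps. Hence $\delta^L_{u,v}\approx \tau\,|\pi_L(u)-\pi_L(v)|$ for every edge. By \Cref{def:graph_bandwidth},
\[
\max_{(u,v)\in E}|\pi_L(u)-\pi_L(v)|\ \ge\ B(G),
\]
so $\max_{(u,v)\in E}\delta^L_{u,v}\gtrsim \tau B(G)$. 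This holds for every $L$, since every serialization induces some ordering.

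\textbf{Step 2: monotonicity of the envelope.} By \Cref{thm:rope_edge_distortion}, $\bar r_b(\delta)=1-(\ln\delta+\gamma)/\ln b$ is strictly decreasing in $\delta>0$. Therefore
\[
\min_{(u,v)\in E}\bar r_b(\delta^L_{u,v})=\bar r_b\!\left(\max_{(u,v)\in E}\delta^L_{u,v}\right)\ \lesssim\ \bar r_b(\tau B(G)).
\]
The regime $1\ll\tau B(G)\ll b$ is what keeps the envelope a valid approximation of $r_b$ at this argument. It also places the maximal stretch either in that regime or beyond it. Past $b$ the approximation no longer holds, but the bound on the envelope itself still follows from monotonicity. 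Since the right-hand side does not depend on $L$, taking $\max_L$ gives $\bar r_b^*(G)\lesssim \bar r_b(\tau B(G))$. Expanding $\ln(\tau B(G))=\ln B(G)+\ln\tau$ yields \Cref{eq:bandwidth_retention_bound}.

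\textbf{Main obstacle.} The delicate step is making Step 1's approximation $p_u^L\approx\tau\pi_L(u)$ rigorous, since the symbol $\lesssim$ absorbs it. The approach is to state a clean inequality. If every node has at least $\tau$ tokens and nodes are serialized contiguously in the order $\pi_L$, then whenever $\pi_L(u)<\pi_L(v)$, the nodes strictly between $u$ and $v$ plus $u$'s own text give
\[
\delta^L_{u,v}\ \ge\ \tau\,(\pi_L(v)-\pi_L(u)).
\]
This gives the exact bound under a lower bound on node lengths. With "approximately $\tau$" tokens per node, the constant-factor slack sits inside $\lesssim$ and shifts the bound by only $O(1/\ln b)$.
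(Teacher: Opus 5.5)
Your proposal is correct and follows essentially the same route as the paper's proof. In both, bandwidth forces some edge with $|\pi_L(u)-\pi_L(v)|\ge B(G)$, the $\tau$-tokens-per-node assumption turns this into $\max_{(u,v)\in E}\delta^L_{u,v}\gtrsim\tau B(G)$, and monotonicity of $\bar r_b$ gives a bound independent of $L$; your exact inequality $\delta^L_{u,v}\ge\tau(\pi_L(v)-\pi_L(u))$ for contiguous serialization with at least $\tau$ tokens per node just sharpens the paper's ``$\approx$'' step.
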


\begin{proof}
The proof is provided in \Cref{app:bandwidth_bound_proof}.
\end{proof}

\noindent \textbf{Implication.}
Corollary 4.3 connects attention distortion and graph bandwidth. Expanding the logarithm in \Cref{eq:bandwidth_retention_bound} gives
\begin{equation}
\bar r_b^*(G)
\lesssim
1
-
\underbrace{\frac{\ln B(G)}{\ln b}}_{\text{Graph}}
\ -
\underbrace{\frac{\ln \tau}{\ln b}}_{\text{Text}}
\ -
\underbrace{\frac{\gamma}{\ln b}}_{\text{RoPE offset}}.
\label{eq:bandwidth_decomposition}
\end{equation}
The text-length term captures the cost of representing each node with natural language, while the graph term captures the additional cost imposed by the graph structure. As the loss is logarithmic, each doubling of either $B(G)$ or $\tau$ decreases the retention envelope by $\ln 2/\ln b$, which is approximately $7.5\%$ when $b=10^4$. Thus, even under the best possible ordering, high-bandwidth graph topologies induce unavoidable attention attenuation.
\subsubsection{Validating \Cref{thm:rope_edge_distortion}}
\label{sec:validating_rope_decay}

We analyze three graph families: Erdős--Rényi (ER), Stochastic Block Models (SBM), and Barabási--Albert (BA). For each, we generate 1,000 graphs with $n=30$ nodes and serialize them using Random, Breadth-First Search (BFS), and Reverse Cuthill--McKee (RCM) orderings; details are given in \Cref{sec:synth_graphs}. RCM is a bandwidth-reduction heuristic designed to reduce edge stretch during linearization. We then simulate the LLM by assigning queries and keys according to the theoretical setup, applying RoPE, and computing the resulting node-level attention similarity passed through a sigmoid (denoted as $\sigma$).

\vspace{0.1cm}
\noindent \textbf{Findings.}
As shown in \Cref{fig:attn_sim}, attention similarity consistently decays as $\delta_{u,v}$ increases, dropping by up to 50\% and qualitatively matching the theoretical prediction. Crucially, while bandwidth-reduction heuristics such as RCM reduce the frequency of large gaps, they cannot eliminate the residual distortion imposed by graph topology. When large gaps remain, RoPE still induces the same attention attenuation. This confirms that improved serialization is only a partial solution: the fundamental mismatch between graph topology and sequence-based positional geometry persists even under optimized orderings.
\section{Linking Attention Behavior to Performance}

While our theory suggests that RoPE attenuates attention over long distances, validating this behavior requires separating structural versus semantic reasoning. In previous studies, correct predictions have been shown to arise from superficial text cues rather than structure, making it difficult to disentangle the two \cite{chen2024exploringpotentiallargelanguage}. To parse these effects, we propose a semi-synthetic task based on real-world graphs that retains structural complexity, such as power-law degree distributions and community structure, while controlling for semantic signals. 
\begin{figure*}[t]
    \centering
    \includegraphics[width=.92\linewidth]{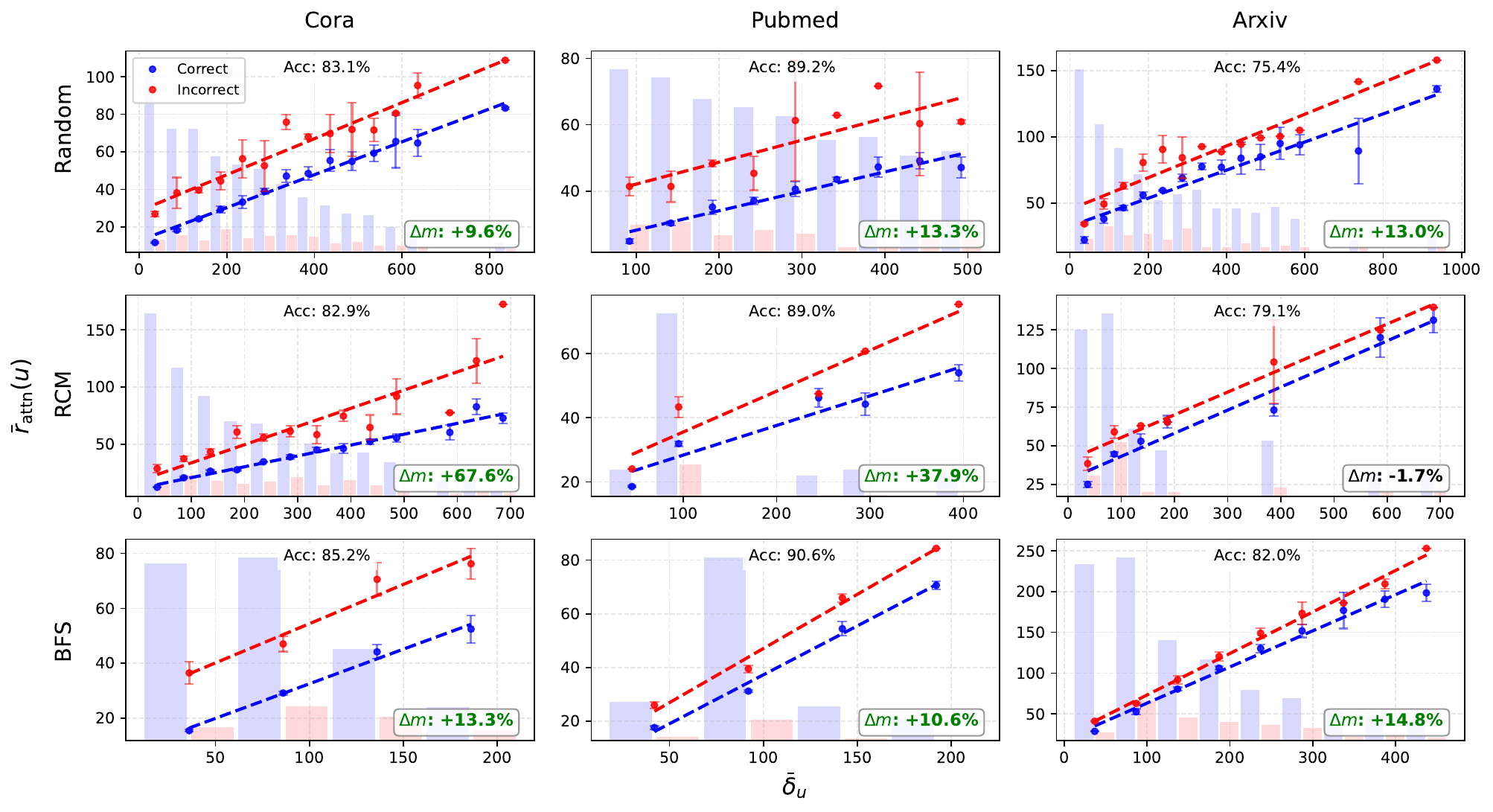}
    \vspace{-0.5cm}
    \Description{A wide figure displaying a 3-by-3 grid of scatter plots. Each subplot plots average neighbor attention rank on the vertical axis against average token distance on the horizontal axis. The data points are divided into two distinct groups: blue markers representing correct classifications and red markers representing incorrect ones. Each group has a corresponding dashed trendline fit to the data. Across all nine panels, the red markers are generally positioned higher on the vertical axis than the blue markers at similar horizontal positions, and the red dashed trendlines exhibit a visibly steeper upward slope compared to the blue dashed trendlines.}
    \caption{
        We plot the average neighbor attention rank $\bar{r}_{\text{attn}}(u)$ as a function of the average token distance $\bar{\delta}_u$. Blue and red markers correspond to correctly and incorrectly classified nodes, respectively, with dashed lines denoting linear fits computed over bins that contain both groups. Across all settings, incorrectly classified nodes exhibit consistently higher rank values at comparable token distances and a systematically steeper trendline, with the slope for incorrect nodes increasing by $\Delta m$\% relative to correct nodes. This percent increase in slope indicates that attention to graph-adjacent neighbors degrades more rapidly with token distance for incorrectly classified nodes, implicating diminished neighbor prioritization as a driver of prediction error.}
    \label{fig:attention_rank_analysis}
    \vspace{-0.2cm}
\end{figure*}
\begin{figure}[ht!]
    \centering
    \includegraphics[width=0.9\linewidth]{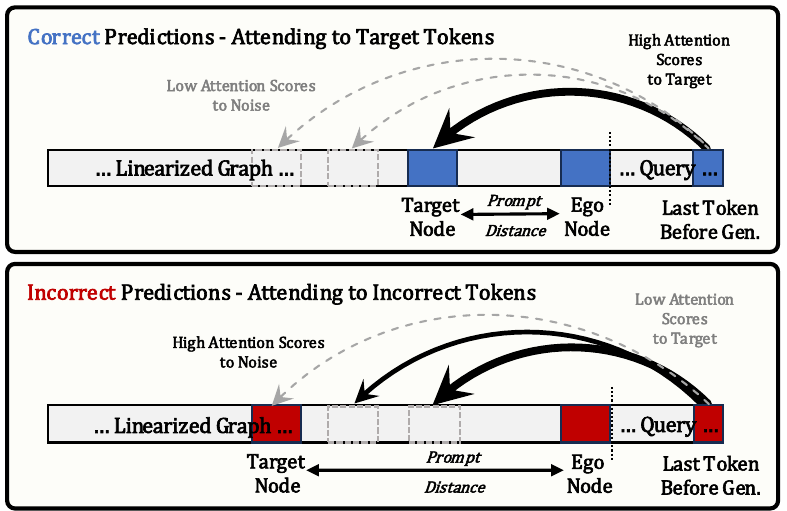}
    \Description{A schematic diagram illustrating a sequence of tokens as a long horizontal gray bar. Within the bar, specific segments are highlighted in different colors to denote the target node, ego node, and final token. A labeled bracket between the target and ego nodes indicates the token distance. Curved arrows arch backward from the final token to earlier segments in the sequence, representing attention. The diagram visually contrasts two scenarios: a 'correct' case where the thickest curved arrow lands directly on the target node, and an 'incorrect' case where the primary arrow diverts to a different, non-target segment.}
    \caption{Evaluation protocol. The gray bar indicates the prompt containing the graph and query. Colored segments mark the target node, ego node, and the final token. The token distance between the target and ego nodes defines $\delta_{u,v}$, while curved arrows indicate attention strength $s_{\text{attn}}(f, v)$. In correct predictions, attention peaks on the target node; in incorrect cases, attention is diverted to non-target nodes.}
    \label{fig:eval_proto}
    \vspace{-0.4cm}
\end{figure}

\subsection{Controlled Neighbor Aggregation Task}

\vspace{0.1cm}
\noindent \textbf{Task.}
We begin by extracting 2-hop subgraphs from graph benchmarks including Cora \cite{cora}, PubMed \cite{pubmed}, Arxiv \cite{arxiv} to preserve realistic graph properties. The text is provided by \cite{wang2025exploringgraphtaskspure}. Then, we generate LLM-as-Predictor-style prompts, where we specify the node, its label, and its neighbors. The goal for the LLM is to predict the \textit{majority class label} among a specific target node's neighbors. An example is seen below, and further details on prompt design are provided in \Cref{sec:cnat}.

\begin{tcolorbox}[colback=gray!10, colframe=gray!40, boxrule=0.5pt, arc=2pt, left=6pt, right=6pt, top=6pt, bottom=6pt]
\small
\textbf{Example Prompt (Derived from Cora):}\\
\texttt{Node 10 has label <Neural Networks> and neighbors \{5, 8\}.}\\
\texttt{Node 5 has label <Theory> and neighbors \{10\}.}\\
\texttt{Node 8 has label <Theory> and neighbors \{10\}.}\\
\texttt{Node 0 has label <Masked> and neighbors \{5, 8, 10\}.}\\
\textbf{Task:} \texttt{What is the most common label across Node 0's neighbors? Choose from: [Theory, Neural Networks, ...].}\\
\textbf{Answer:} \texttt{Theory}
\end{tcolorbox}

\vspace{0.1cm}
\noindent \textbf{Intuition}. This task probes an LLM’s ability to perform a fundamental operation in graph learning: neighborhood aggregation \citep{hamilton2018inductiverepresentationlearninglarge}. 
Moreover, we conceptualize the raw node text in TAGs as a noisy approximation of a node's label. Thus, in a realistic forward pass, an LLM must first de-noise this text to extract the relevant signal before aggregation. By directly exposing the neighboring labels, we simulate an idealized scenario where this extraction is perfect, isolating the \textit{structural aggregation} from the \textit{semantic extraction}. As a result, this task enforces a strict dependency on the graph where the answer can only be derived by retrieving and aggregating the neighbor signals. Consequently, incorrect predictions imply that the LLM failed to process the serialized graph topology.

\vspace{0.1cm}
\noindent \textbf{Setup.}
Our analysis combines performance with an examination of attention patterns to diagnose the LLM's behavior. For each node \(u\), we identify the neighbors \(\mathcal{N}(u)\) appearing in the prompt. For every \(v \in \mathcal{N}(u)\), let \(\mathcal{T}_v\) denote the token span corresponding to its label, and \(\mathcal{T}_u\) the token span for the query node. All experiments use Qwen2.5-3B-Instruct.
As Qwen2.5 is a decoder-only LLM, the final prediction is generated from the hidden state of the last token \(t_f\). Consequently, all information required to answer the query, most critically, the distribution of neighbor labels, must be routed to \(t_f\) through the attention mechanism. We use this to inform our evaluation protocol. 

\vspace{0.1cm}
\noindent \textbf{Evaluation.}
We characterize each neighbor \(v \in \mathcal{N}(u)\) using two quantities. First, \(\delta_{u,v}\) measures the separation between $u$ and $v$ in the prompt, which is derived from the separation in their respective first tokens (e.g. '\underline{Node} X...'). Second, while attention similarity captures raw magnitude, the actual influence of a token after softmax depends on its relative standing against other tokens in the sequence.  Therefore, rather than relying on magnitude alone, we assess whether a neighbor's tokens are prioritized or suppressed. A pictorial example of how these quantities are extracted is provided in \Cref{fig:eval_proto}.

Specifically, we define the \textit{token rank} of a token at position $j$ as the count of other tokens $k$ in the sequence that receive a higher score:

{\small
\begin{equation*}
r(t_{f}, j) \;=\; \sum_{k \in S, k \neq j} \mathbb{I}\!\left( s(t_{f}, k) > s(t_{f}, j) \right).
\end{equation*}
}
A rank of 0 indicates perfect prioritization. To characterize a target node $v$, we aggregate this metric by averaging over its tokens $\mathcal{T}(v)$:

\vspace{-0.3cm}
{\small
\begin{equation*}
r_{\text{attn}}(v) \;=\; \frac{1}{|\mathcal{T}(v)|} \sum_{j \in \mathcal{T}(v)} r(t_{f}, j).
\end{equation*}
}
To quantify distortion for a node $u$, we summarize the neighborhood impact by averaging both quantities across all neighbors $v \in \mathcal{N}(u)$:

\vspace{-0.3cm}
{\small
\begin{equation*}
\bar{\delta}_u \;=\; \frac{1}{|\mathcal{N}(u)|} \sum_{v \in \mathcal{N}(u)} \delta_{u,v}\, , \qquad \bar{r}_{\text{attn}}(u) \;=\; \frac{1}{|\mathcal{N}(u)|} \sum_{v \in \mathcal{N}(u)} r_{\text{attn}}(v).
\end{equation*}
}
Relating average rank $\bar{r}_{\text{attn}}(u)$ to average separation $\bar{\delta}_u$ mechanistically probes the model’s ability to distinguish relevant neighbors from those deprioritized solely due to linearization artifacts.

Notably, attention heads in LLMs are known to be functionally specialized, with only a subset contributing to reasoning, while others encode auxiliary or redundant patterns \citep{voita-etal-2019-analyzing}. Thus, aggregating all heads can obscure the effects induced by serialization. To focus on heads most relevant with the input structure, we analyze the top-\(k\) attention heads (\(k=25\%\)), selected by their \emph{pre-RoPE} attention scores.
Similarly, in instruction-tuned LLMs, the final layers often shift from reasoning to response formatting due to human feedback-style training \citep{miao2025energylossphenomenonrlhf}. Since our evaluation aims to probe how attention distinguishes relevant neighbors within the prompt, we analyze late, but not final, layers, where attention remains tied to reasoning (layer 29 for Qwen2.5-3B). Further details are provided in \Cref{sec:cnat}.

\vspace{0.1cm}
\noindent \textbf{Findings.}
In \Cref{fig:attention_rank_analysis}, we examine how attention rank $\bar{r}_{\text{attn}}(u)$ varies with token distance $\bar{\delta}_u$, stratified by prediction correctness. We restrict to bins containing both correct and incorrect outcomes, isolating attention patterns associated with success or failure rather than uniformly easy or hard regions, leading to two patterns. First, incorrect predictions consistently have higher attention ranks than correct predictions at comparable distances, indicating that failures occur when graph-relevant neighbor tokens are not prioritized over task-irrelevant tokens. Second, attention quality degrades with token distance: $\bar{r}_{\text{attn}}(u)$ increases as $\bar{\delta}_u$ grows, with a steeper slope for incorrect predictions. The large values of $\Delta m$ further show that failed predictions suffer a faster loss of attention with distance. Together, these results reveal an attention-level failure mode in serialized graph reasoning: when graph evidence lies beyond the model's effective attention range, its influence weakens, leading to incorrect predictions.

\begin{figure}
    \centering
    \includegraphics[width=1.0\linewidth]{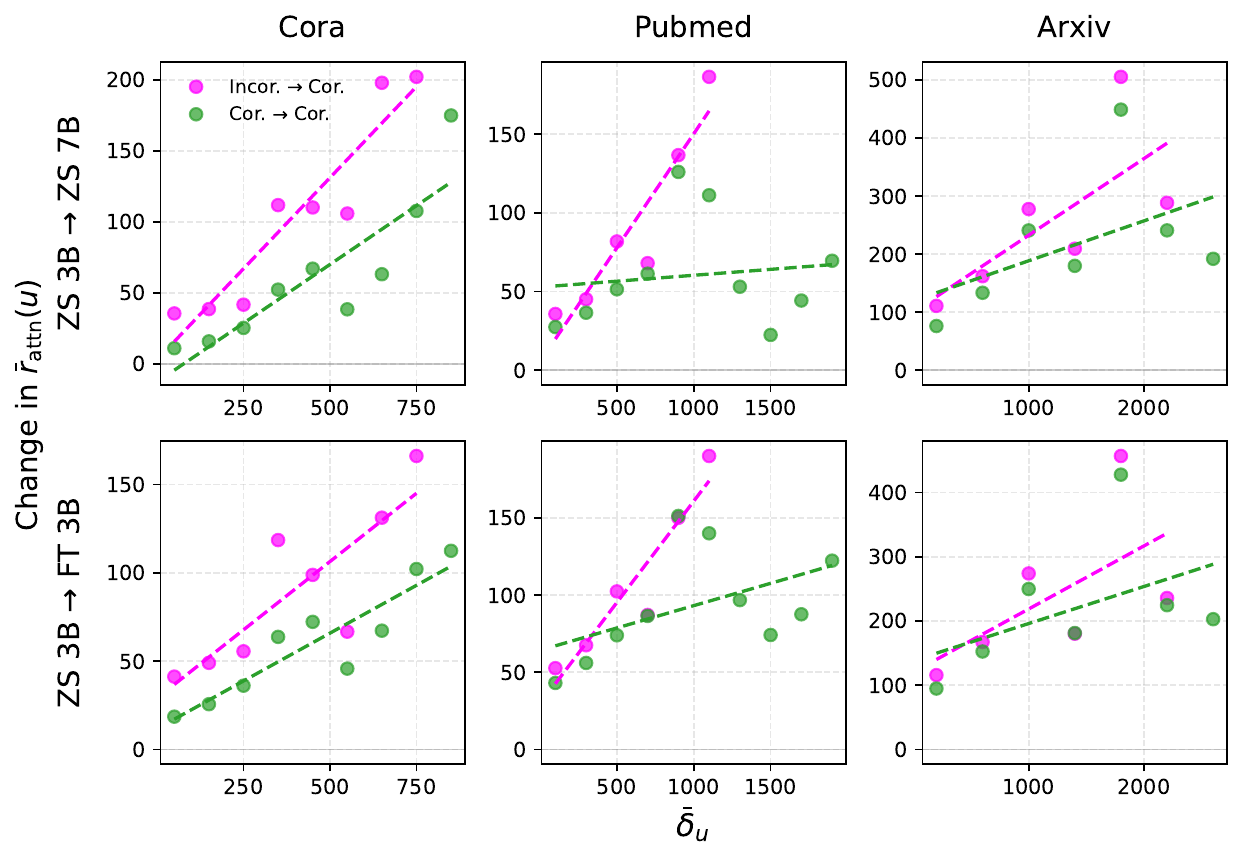}
    \vspace{-0.5cm}
    \Description{A two row figure with three columns. The top row corresponds to increasing model size, and the bottom row corresponds to fine-tuning. Both rows plot the change in average attention rank for datasets Cora, Pubmed, and Arxiv. The vertical axis denotes the change in attention rank, and the horizontal axis denotes the average token distance. The plotted data in each panel compares groups of nodes, specifically contrasting newly corrected nodes against those that remained correct. Visually, all groups display an upward trend as the horizontal value increases, but the data points or curves representing the 'Correct to Correct' group exhibit a noticeably flatter or smaller upward shift compared to the steeply rising trend of the newly corrected nodes.}
    \caption{We plot the change in $\bar{r}_{\text{attn}}(u)$ for nodes corrected by increasing model size (top) and fine-tuning (bottom), alongside nodes that stayed correct. Positive changes indicate improvements in attention rank, i.e., relevant neighbors moving to lower ranks. While both groups show improved attention ranking with increasing $\bar{\delta}_u$, the Correct $\to$ Correct cases exhibit smaller changes. This indicates that while mitigation strategies generally improve attention alignment, they induce larger rank shifts for nodes whose predictions are corrected.}
\label{fig:mitigation_rank_improvement}
\vspace{-0.3cm}
\end{figure}

\subsection{Effectiveness of Current Mitigation Strategies}

We now probe how standard mitigation strategies like fine-tuning (FT) and model scaling and see how they change attention modulation. Specifically, we repeat the experiments in the previous section but now (a) fine-tune the Qwen2.5-3B on the aggregation task and (b) perform zero-shot evaluation on a larger Qwen2.5-7B model.

\vspace{0.1cm}
\noindent\textbf{Findings.} Our analysis indicates that both mitigation strategies improve performance by alleviating the attention degradation identified earlier. As shown in \Cref{fig:mitigation_rank_improvement}, positive changes in $\bar{r}_{\text{attn}}(u)$ occur consistently across nodes, where \textit{change is computed as the pre-mitigation rank minus the post-mitigation rank}. Thus, positive values indicate that relevant neighbors move to better, lower attention ranks, though with different magnitudes. Nodes that transition from {incorrect $\to$ correct} exhibit larger attention-rank improvements than those that remain {correct}, suggesting that while these methods broadly refine attention, correcting errors requires a more pronounced alignment between attention and graph structure.
Moreover, this effect is distance-dependent, indicating that fine-tuning and model scaling actively correct the distance-induced suppression that affects far neighbors in the serialized graph.
Taken together, these findings suggest that performance gains from these strategies arise, in part, from improved alignment between attention and graph structure. This observation motivates the question of whether benefits can be achieved more directly, without the computational overhead of larger models or the data requirements of task-specific fine-tuning.

\section{GaLA: Graph-aligned Language Attention}
\label{sec:gala}

Building on our analysis showing that linearization-induced attention decay is a contributor to poor zero-shot performance, we propose a direct intervention at the level of attention itself. We introduce Graph-Aligned Language Attention (GaLA), a lightweight method that applies a graph-aware attention modification at inference time to a frozen LLM backbone. At a high level, GaLA softly encourages stronger attention between tokens corresponding to graph-adjacent nodes, while preserving the model’s pre-trained linguistic behavior.

\subsection{Structure-Aware Attention Injection}

For a graph $G$, let $\mathcal{M}:\{1,\dots,T\}\rightarrow V\cup\{\varnothing\}$ map token indices to $G$'s nodes (when applicable). For layer $l$ and attention head $h$, GaLA augments the pre-softmax attention logits with a graph-aware bias. Specifically, the modified attention logit between tokens $i$ and $j$ is

{\small
\begin{equation*}
\tilde{a}^{(l,h)}_{i,j}
=
\underbrace{
\frac{
(\tilde{\mathbf{q}}^{(l,h)}_i)^\top
\tilde{\mathbf{k}}^{(l,h)}_j
}{\sqrt{d}}
}_{\text{semantic attention}}
\;+\;
\underbrace{
\lambda_h \, B_{i,j}
}_{\text{graph awareness}},
\end{equation*}
}where $\tilde{\mathbf{q}}^{(l,h)}_i = \mathbf{R}_\Theta(i)\mathbf{q}^{(l,h)}_i$ and $\tilde{\mathbf{k}}^{(l,h)}_j = \mathbf{R}_\Theta(j)\mathbf{k}^{(l,h)}_j$ are the RoPE-rotated queries and keys, and $d$ is the head dimension.
The structural bias $B_{i,j}$ is defined for a graph $G$ as:
{\small
\begin{equation*}
B_{i,j}
=
\begin{cases}
\dfrac{1}{d_G\!\left(\mathcal{M}(i),\,\mathcal{M}(j)\right)}
& \text{if } 
\substack{
\mathcal{M}(i),\mathcal{M}(j)\in V,\\
\mathcal{M}(i)\ne \mathcal{M}(j),\\
d_G\!\left(\mathcal{M}(i),\mathcal{M}(j)\right) < \infty
},\\[2ex]
0 & \text{otherwise}.
\end{cases}
\end{equation*}
}
where $d_G(u,v)$ denotes the shortest path distance between nodes $u$ and $v$. This formulation softly promotes attention between tokens of nearby graph nodes while leaving unrelated token pairs unchanged. The resulting attention weights are obtained via
{\small $
A^{(l,h)}_{i,j} = \mathrm{softmax}_j\!(\tilde{a}^{(l,h)}_{i,j}).
$}
$\lambda_h$ controls how each head incorporates $B_{i,j}$. When $\lambda_h \to 0$, the head behaves identically to the original LLM; when $\lambda_h > 0$, the head is biased toward the graph, enabling specialization without overwriting the LLM’s capabilities. We apply this injection only in the \emph{early layers} ($l \in [0,L/2]$, where $L$ is the number of layers), allowing graph structure to shape contextual representations while leaving later layers to translate these representations into natural language outputs. We next describe how $\lambda_h$ is computed.

\subsection{Deriving the Per-Head Modulation}
\label{sec:head_selection}

GaLA sets the modulation strength $\lambda_h$ for each head with an inexpensive, one-time calibration on a small set $\mathcal{D}_{\mathrm{cal}}$, performed before inference while leaving all model weights frozen. We provide two variants that differ only in whether they use calibration labels: a \emph{label-free} entropy heuristic, and a \emph{label-based} gradient score that identifies useful heads more precisely when labels are available.

\vspace{0.05cm}
\noindent\textbf{Entropy.}
Using no labels, GaLA sets $\lambda_h$ from each head's attention entropy, a lightweight proxy for its functional role:
{\small
\begin{equation*}
\lambda_h
=
\beta
\sqrt{
\frac{H(h)}
{\max_{h' \in \mathcal{H}_l} H(h')}
},
\qquad
H(h)
=
\mathbb{E}_{S \in \mathcal{D}_{\mathrm{cal}}}
\left[
-\frac{1}{T}
\sum_{i=1}^{T}
\sum_{j=1}^{i}
A^{(h)}_{i,j}
\log A^{(h)}_{i,j}
\right].
\end{equation*}
}
Here, the inner sum is restricted to $j \le i$ to respect the causal mask. Low-entropy heads concentrate on a few tokens, typically serving syntactic or positional roles, whereas high-entropy heads spread attention broadly to integrate context \cite{clark2019doesbertlookat}. The square root compresses the range so that, after per-layer normalization, $\lambda_h$ emphasizes moderate-to-high-entropy heads while leaving sharply focused, low-entropy heads largely untouched.

\vspace{0.05cm}
\noindent\textbf{Gradient.}
When calibration labels are available, e.g., the same small set used to tune $\beta$ or other hyperparameters, we can identify beneficial heads directly. We attach a per-head scale $g_h$ initialized to $1$ to each head's structural bias and freeze all model parameters, so that only $g_h$ can update. A single backward pass gives a head score based on the gradient of the ground-truth-label negative log-likelihood:
{\small
\begin{equation*}
\lambda_h
=
\beta
\frac{\max\{0,-G(h)\}}
{\max_{h' \in \mathcal{H}_l} \max\{0,-G(h')\}},
\quad
G(h)
=
\mathbb{E}_{S \in \mathcal{D}_{\mathrm{cal}}}
\left[
\frac{\partial}{\partial g_h}
\big(-\log p(y \mid S)\big)
\right].
\end{equation*}
}
A negative gradient indicates that increasing a head's structural bias lowers the loss, marking a head that benefits from graph structure. $\max\{0,-G(h)\}$ retains only the beneficial direction. This requires only a single backward pass for head selection and does not update any model weights; the LLM remains frozen. In both variants, $\mathcal{H}_l$ denotes the heads in layer $l$ and $\beta$ scales the overall graph influence.

\subsection{Relation to Graph Transformers}
\label{sec:relation_gt}

GaLA builds on an insight from graph transformers (GT) where self-attention can be made graph-aware by injecting connectivity directly into the attention mechanism \cite{dwivedi2021generalizationtransformernetworksgraphs, rampasek2023recipegeneralpowerfulscalable, ying2021transformersreallyperformbad}. Specifically, GaLA adopts an attention modulation strategy which encodes distances in the graph. 
While this has similarities to Graphormer's design \cite{ying2021transformersreallyperformbad}, GaLA departs from Graphormer in two ways to accommodate the LLM:

\begin{itemize}[leftmargin=*]
    \item \textbf{Token vs.\ Node:} GTs fundamentally operate on fixed node embeddings. In contrast, LLMs consume graphs serialized into raw text. GaLA bridges this by applying graph structure at the token level, mapping node distances onto token-to-token attention.
    
    \item \textbf{Frozen-Model:} Most GTs require training to attain structure-awareness. GaLA instead introduces structure through an attention bias applied to a frozen LLM, with optional one-time calibration to set its head-wise strengths. This enables off-the-shelf LLMs to perform graph reasoning without retraining their parameters.
\end{itemize}

\section{Empirical Analysis of GaLA}
\label{sec:empirical}

\begin{table*}[t]
    \centering
    \renewcommand{\arraystretch}{0.95}
    \setlength{\tabcolsep}{3.0pt}
    \caption{Evaluation on semi-synthetic and real-world datasets. We highlight whether methods possess \textbf{language capabilities}, incorporate \textbf{graph bias}, and operate \textbf{at inference time}. GaLA delivers gains across both tasks while applying only a lightweight inference-time modification with light calibration. GraphICL is omitted from Semi-Synth given there are no in-context examples.}
    \label{tab:main_results}
    \vspace{-0.3cm}
    \footnotesize
    \begin{tabular}{c l l ccc ccc ccc}
    \toprule
    \multicolumn{6}{c}{}
    & \multicolumn{3}{c}{\textbf{Semi-Synth. Perf.}}
    & \multicolumn{3}{c}{\textbf{Real-World Perf.}} \\
    \cmidrule(lr){7-9} \cmidrule(lr){10-12}

    \textbf{}
    & \textbf{Type}
    & \textbf{Configuration}
    & \makecell{\textbf{Language}\\\textbf{Capability}}
    & \makecell{\textbf{Graph}\\\textbf{Bias}}
    & \makecell{\textbf{Frozen}\\\textbf{LLM Backbone}}
    & \textbf{Cora}
    & \textbf{PubMed}
    & \textbf{Arxiv}
    & \textbf{Cora}
    & \textbf{PubMed}
    & \textbf{Arxiv} \\
    \midrule \midrule

    % =============================================
    % MODEL 1: Qwen2.5-3B
    % =============================================
    \multirow{8}{*}{\rotatebox{90}{\textbf{\textsc{Qwen2.5-3B}}}}
    & \multirow{2}{*}{\textit{Baselines}}
        & Random Lin.
        & \cmark & \xmark & \cmark
        & $82.90 \pm 0.10$ & $88.50 \pm 0.30$ & $74.48 \pm 0.83$
        & $64.90 \pm 2.90$ & $77.80 \pm 0.60$ & $33.80 \pm 1.20$ \\
    &   & BFS Lin.
        & \cmark & \xmark & \cmark
        & $83.30 \pm 1.50$ & $90.30 \pm 0.70$ & $80.28 \pm 1.05$
        & $65.90 \pm 1.90$ & $77.50 \pm 0.90$ & $35.60 \pm 1.06$ \\
    \cmidrule(lr){2-12}

    & \multirow{4}{*}{\textit{Interventions}}
    & Chain-of-Thought (256)
        & \cmark & \xmark & \cmark
        & $93.57 \pm 0.55$ & $82.20 \pm 0.61$ & $78.50 \pm 0.62$
        & $32.80 \pm 0.71$ & $59.55 \pm 0.64$ & $28.15 \pm 0.07$ \\
    &  & Chain-of-Thought (512)
        & \cmark & \xmark & \cmark
        & $94.21 \pm 0.42$ & $88.78 \pm 0.43$ & $86.72 \pm 0.39$
        & $62.25 \pm 0.21$ & $84.10 \pm 0.71$ & $39.05 \pm 0.64$ \\
    &  & GraphICL
        & \cmark & \xmark & \cmark
        & \textcolor{gray}{N/A} & \textcolor{gray}{N/A} & \textcolor{gray}{N/A}
        & $53.90 \pm 0.70$ & $74.50 \pm 0.30$ & $32.10 \pm 1.30$ \\
    &   & Fine-Tuning
        & \cmark & \xmark & \xmark
        & $98.80 \pm 0.53$ & $98.90 \pm 0.26$ & $97.39 \pm 0.48$
        & $79.81 \pm 0.99$ & $89.80 \pm 1.13$ & $68.15 \pm 3.89$ \\
    \cmidrule(lr){2-12}

    & \multirow{2}{*}{\textbf{Ours}}
        & \textbf{GaLA} (Entropy)
        & \cmark & \cmark & \cmark
        & $89.10 \pm 1.50$ & $91.50 \pm 0.10$ & $83.06 \pm 0.89$
        & $67.00 \pm 2.20$ & $79.20 \pm 0.80$ & $35.80 \pm 0.99$ \\
    &   & \textbf{GaLA} (Gradient)
        & \cmark & \cmark & \cmark
        & $90.40 \pm 1.40$ & $92.00 \pm 1.00$ & $82.37 \pm 0.89$
        & $67.40 \pm 2.20$ & $82.50 \pm 2.10$ & $36.80 \pm 0.99$ \\
    \midrule \midrule

    % =============================================
    % MODEL 2: Ministral-3B
    % =============================================
    \multirow{8}{*}{\rotatebox{90}{\textbf{\textsc{Ministral-3B}}}}
    & \multirow{2}{*}{\textit{Baselines}}
        & Random Lin.
        & \cmark & \xmark & \cmark
        & $71.30 \pm 0.70$ & $82.00 \pm 0.20$ & $83.20 \pm 0.88$
        & $55.60 \pm 1.20$ & $73.40 \pm 0.00$ & $20.20 \pm 0.28$ \\
    &   & BFS Lin.
        & \cmark & \xmark & \cmark
        & $69.30 \pm 1.70$ & $80.00 \pm 0.60$ & $84.40 \pm 0.35$
        & $56.20 \pm 2.60$ & $74.40 \pm 0.80$ & $19.60 \pm 0.64$ \\
    \cmidrule(lr){2-12}

    & \multirow{4}{*}{\textit{Interventions}}
    & Chain-of-Thought (256)
        & \cmark & \xmark & \cmark
        & $81.43 \pm 2.06$ & $80.10 \pm 1.55$ & $64.31 \pm 0.55$
        & $9.95 \pm 0.49$ & $13.65 \pm 1.48$ & $0.95 \pm 0.21$ \\
    &  & Chain-of-Thought (512)
        & \cmark & \xmark & \cmark
        & $91.17 \pm 0.45$ & $93.87 \pm 0.86$ & $91.52 \pm 0.43$
        & $35.55 \pm 0.92$ & $53.55 \pm 0.07$ & $9.45 \pm 0.07$ \\
    &  & GraphICL
        & \cmark & \xmark & \cmark
        & \textcolor{gray}{N/A} & \textcolor{gray}{N/A} & \textcolor{gray}{N/A}
        & $56.40 \pm 0.40$ & $73.60 \pm 1.00$ & $15.60 \pm 0.20$ \\
    &   & Fine-Tuning
        & \cmark & \xmark & \xmark
        & $99.60 \pm 0.35$ & $99.43 \pm 0.23$ & $98.34 \pm 0.52$
        & $86.33 \pm 1.34$ & $92.50 \pm 0.42$ & $67.00 \pm 1.56$ \\
    \cmidrule(lr){2-12}

    & \multirow{2}{*}{\textbf{Ours}}
        & \textbf{GaLA} (Entropy)
        & \cmark & \cmark & \cmark
        & $79.50 \pm 0.50$ & $83.10 \pm 0.50$ & $86.20 \pm 0.68$
        & $56.10 \pm 2.30$ & $74.30 \pm 0.10$ & $18.20 \pm 0.84$ \\
    &   & \textbf{GaLA} (Gradient)
        & \cmark & \cmark & \cmark
        & $87.90 \pm 2.30$ & $87.50 \pm 0.10$ & $85.60 \pm 0.68$
        & $56.70 \pm 2.10$ & $74.10 \pm 0.90$ & $17.80 \pm 0.84$ \\
    \midrule \midrule

    % =============================================
    % MODEL 3: Qwen2.5-7B
    % =============================================
    \multirow{8}{*}{\rotatebox{90}{\textbf{\textsc{Qwen2.5-7B}}}}
    & \multirow{2}{*}{\textit{Baselines}}
        & Random Lin.
        & \cmark & \xmark & \cmark
        & $92.40 \pm 0.60$ & $92.80 \pm 0.20$ & $87.24 \pm 0.41$
        & $67.50 \pm 1.50$ & $86.20 \pm 0.00$ & $46.40 \pm 0.43$ \\
    &   & BFS Lin.
        & \cmark & \xmark & \cmark
        & $92.40 \pm 0.20$ & $93.20 \pm 0.20$ & $90.11 \pm 0.61$
        & $68.20 \pm 3.20$ & $86.30 \pm 0.70$ & $46.80 \pm 0.28$ \\
    \cmidrule(lr){2-12}

    & \multirow{4}{*}{\textit{Interventions}}
    & Chain-of-Thought (256)
        & \cmark & \xmark & \cmark
        & $98.90 \pm 0.56$ & $96.10 \pm 0.79$ & $93.07 \pm 0.12$
        & $54.30 \pm 1.03$ & $68.95 \pm 0.98$ & $38.91 \pm 1.47$ \\
    &  & Chain-of-Thought (512)
        & \cmark & \xmark & \cmark
        & $99.12 \pm 0.41$ & $97.34 \pm 0.34$ & $93.81 \pm 0.24$
        & $67.00 \pm 0.85$ & $88.60 \pm 0.71$ & $51.70 \pm 2.40$ \\
    &  & GraphICL
        & \cmark & \xmark & \cmark
        & \textcolor{gray}{N/A} & \textcolor{gray}{N/A} & \textcolor{gray}{N/A}
        & $61.10 \pm 1.50$ & $84.40 \pm 1.20$ & $48.40 \pm 1.40$ \\
    &   & Fine-Tuning (LoRA)
        & \cmark & \xmark & \xmark
        & $96.57 \pm 0.57$ & $96.83 \pm 0.90$ & $93.18 \pm 0.76$
        & $70.85 \pm 0.49$ & $90.05 \pm 0.07$ & $63.15 \pm 0.21$ \\
    \cmidrule(lr){2-12}

    & \multirow{2}{*}{\textbf{Ours}}
        & \textbf{GaLA} (Entropy)
        & \cmark & \cmark & \cmark
        & $93.70 \pm 0.30$ & $93.80 \pm 0.00$ & $90.95 \pm 0.59$
        & $68.60 \pm 2.80$ & $88.10 \pm 1.10$ & $45.40 \pm 0.28$ \\
    &   & \textbf{GaLA} (Gradient)
        & \cmark & \cmark & \cmark
        & $93.20 \pm 0.60$ & $94.20 \pm 0.20$ & $91.18 \pm 0.59$
        & $69.00 \pm 3.20$ & $87.70 \pm 0.10$ & $47.20 \pm 0.28$ \\
    \bottomrule
    \end{tabular}
    \end{table*}

In this section, we evaluate GaLA across two regimes. First, we revisit the semi-synthetic task to verify that GaLA can improve this task without training. Then, we apply GaLA to real-world node classification, demonstrating that restoring attention alignment can provide accuracy gains on standard TAG benchmarks.

\vspace{0.1cm}
\noindent \textbf{Setup.}
We evaluate GaLA across three LLMs: \textbf{Qwen2.5-3B-Instruct}, \textbf{Ministral-3B-Instruct}, and \textbf{Qwen2.5-7B-Instruct}, selected to assess model scale and different architectures. Unless otherwise stated (e.g., for the random linearization baseline), all methods utilize {BFS linearization} to minimize edge stretch. We compare our method against standard zero-shot inference, \textbf{Chain-of-Thought (CoT)} prompting with different generation lengths (256/512) to test reasoning-based mitigation, \textbf{GraphICL} \cite{sun2025graphiclunlockinggraphlearning} (real-world only), an in-context-learning baseline that places labeled neighbor and demonstration examples in the prompt, and \textbf{Fine-Tuning} as a performance upper bound. For Qwen2.5-7B-Instruct, we use LoRA, while for the others we perform full Fine-Tuning. GaLA biases attention only in the early layers ($[0, L/2]$) and assigns continuous bias weights across heads via either a label-free entropy heuristic or a labeled-calibration gradient score; in both cases the strength $\beta$ is tuned on a small validation set of 200 nodes. More details are provided in \Cref{sec:cnat}.

\subsection{Semi-Synthetic Aggregation and GaLA}

We begin with the semi-synthetic task. As shown in the left half of \Cref{tab:main_results}, GaLA consistently improves over standard linearization. Relative to BFS, GaLA yields gains of up to \textbf{18.6\%} on Ministral-3B ($69.3\% \rightarrow 87.9\%$) and \textbf{7.1\%} on Qwen2.5-3B ($83.3\% \rightarrow 90.4\%$) for Cora, far exceeding the gains from switching Random to BFS alone. This suggests that \textit{serialization order alone is insufficient} and correcting attention alignment provides an additional benefit.
The same trend holds on PubMed and Arxiv. On PubMed, GaLA improves Qwen2.5-3B from $90.3\%$ to $92.0\%$ and Ministral-3B from $80.0\%$ to $87.5\%$, recovering a meaningful fraction of the gap to fine-tuned models. While CoT can match or exceed GaLA in some cases, it requires substantially higher inference cost (see \Cref{fig:runtime}); moreover, GaLA outperforms CoT-512 on PubMed for Qwen2.5-3B ($92.0\%$ vs.\ $88.78\%$), indicating that explicit reasoning chains do not always compensate for structural distortion. Finally, gains are largest on smaller backbones and more modest on Qwen2.5-7B, suggesting that larger models are more robust to linearization and leave less distortion for GaLA to correct.

\subsubsection{Runtime Costs} 
Figure~\ref{fig:runtime} plots accuracy versus total runtime to process the test nodes for Qwen2.5-3B across Cora, PubMed, and Arxiv for the semi-synthetic task.  Focusing on the inference-time methods, across all datasets, GaLA operates at a similar runtime as BFS, while CoT incurs orders-of-magnitude overhead.
On Cora, BFS completes in $225$ seconds and GaLA in $\sim$$265$ seconds (including its one-time calibration), whereas CoT with 256 and 512 tokens require over $5000$ seconds, more than a $20\times$ slowdown. 
Overall, GaLA achieves accuracy gains at a small overhead cost, especially when compared to CoT, making it well-suited for practical use.

\begin{figure}
    \centering
    \includegraphics[width=1.00\linewidth]{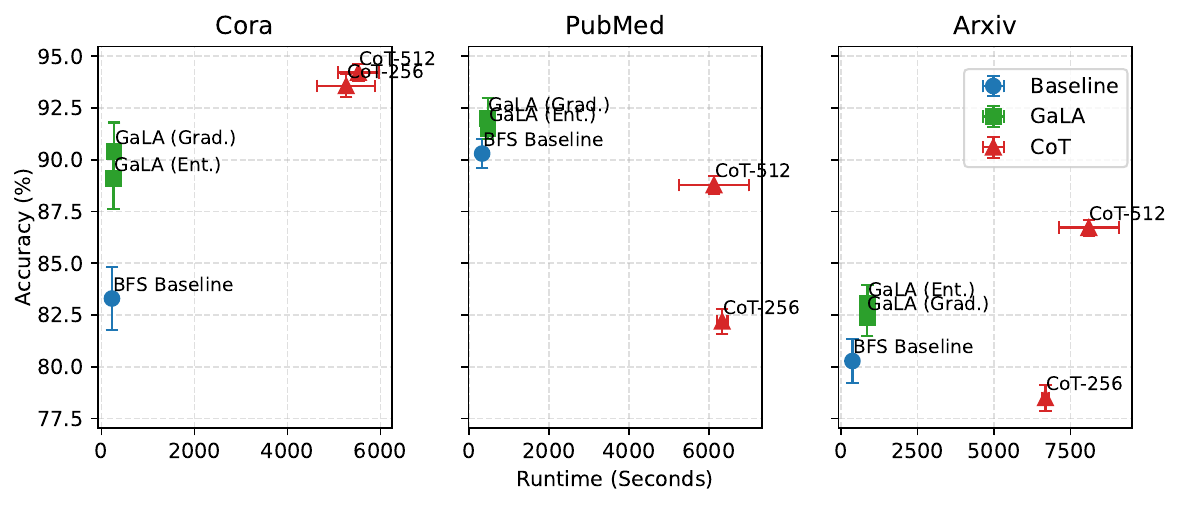}
        \Description{A plot graphing accuracy on the vertical axis against runtime on the horizontal axis. Data points representing various inference-time methods are plotted across the chart. The points for the GaLA method are positioned higher on the vertical axis but remain far to the left on the horizontal axis, aligning closely with the baseline method's low runtime. In contrast, the points for the CoT method are located far to the right, indicating a significantly higher runtime. An arrow on the chart highlights the upward shift toward higher accuracy without a corresponding horizontal shift in runtime.}
    \caption{
    Accuracy versus runtime for Qwen2.5-3B across datasets for inference-time methods.GaLA delivers accuracy gains at near-baseline cost, while CoT incurs large overhead.}
    \label{fig:runtime}
\end{figure}

\subsection{Real-World Node Classification}

Next, we evaluate whether GaLA transfers to real-world tasks. As shown in the right half of \Cref{tab:main_results}, GaLA improves performance in most settings, with gains often reaching several percentage points and peaking at 5.0\% on PubMed for Qwen2.5-3B (77.5\% to 82.5\%). This suggests that structural alignment remains beneficial even when models have access to rich node text, rather than only in the controlled semi-synthetic setting. Notably, GaLA also surpasses {GraphICL} by $4.7$--$13.5$\% on Qwen2.5-3B and in nearly all settings across the other backbones. These findings demonstrate that directly aligning attention with graph structure provides a more effective, and more label-efficient, use of structure than supplying it as in-context examples.
That said, the narrower margin compared to the semi-synthetic task is expected, as real-world TAGs provide strong semantic signals that can partially, and sometimes fully as argued in \cite{chen2024exploringpotentiallargelanguage} for PubMed, compensate for structural distortion. Consequently, while serialization-induced attention degradation is less dominant, it is not entirely resolved by text alone. The continued gains from GaLA suggest that positional misalignment persists even when text-based cues are available, motivating future work on when structural alignment is most useful across domains with varying textual and relational signal.

\section{Conclusion}

In this work, we moved beyond characterizing \textit{when} LLMs fail at graph reasoning to diagnosing a key mechanism behind \textit{why}, identifying a geometric misalignment between graph topology and RoPE. We showed that when graph-adjacent nodes are separated in the serialized sequence, RoPE can systematically suppress their interaction, creating an attention bottleneck that degrades performance.
To address this issue, we introduced GaLA, a lightweight attention intervention deployed at inference time that biases attention toward the graph without retraining or architectural modification. By enabling graph awareness, GaLA tends to improve node classification performance across both semi-synthetic and real-world datasets.
Together, our findings highlight the importance of aligning positional inductive biases with underlying graph structure, and suggest that principled interventions can offer a scalable strategy to improve graph reasoning with LLMs.

\section{Acknowledgements}
This material is partially based upon work supported by National Science Foundation under Grants No. IIS~2212143 and IIS~2504090. We thank the anonymous reviewers for their valuable feedback.

%% The next two lines define the bibliography style to be used, and
%% the bibliography file.
\bibliographystyle{ACM-Reference-Format}
\balance
\bibliography{references}

\appendix

\section{Theoretical Analysis}
\label{app:theoretical_analysis}

\subsection{Derivation of \Cref{thm:rope_edge_distortion}}
\label{app:thm1_proof}

Let two tokens occur at positions $i$ and $j$, with separation
$\delta=|i-j|$. The node-level score $s_{\mathrm{attn}}(u,v)$ is the
average of this token-level quantity over all pairs in
$\mathcal{T}(u)\times\mathcal{T}(v)$.
We analyze a single attention head of dimension $d=2N$, consisting of
$N$ two-dimensional RoPE subspaces. In subspace $m$, the RoPE frequency is
$\theta_m=b^{-m/N}$, and the pre-RoPE query and key components are
$\mathbf{q}^{(m)}_i,\mathbf{k}^{(m)}_j\in\mathbb{R}^2$. Let
\[
\boldsymbol{\Sigma}_m
=
\mathbb{E}\!\left[
\mathbf{k}^{(m)}_j(\mathbf{q}^{(m)}_i)^\top
\right]\in\mathbb{R}^{2\times2}.
\]
Its trace gives the pre-RoPE semantic alignment,
\[
\rho_m
=
\mathbb{E}\!\left[
(\mathbf{q}^{(m)}_i)^\top \mathbf{k}^{(m)}_j
\right]
=
\operatorname{Tr}\boldsymbol{\Sigma}_m,
\]
and its antisymmetric component is summarized by
\[
c_m
=
(\boldsymbol{\Sigma}_m)_{21}-(\boldsymbol{\Sigma}_m)_{12}.
\]

Let $\mathbf{R}_{p,\theta_m}$ denote the two-dimensional RoPE rotation at
position $p$ and frequency $\theta_m$. Then
\[
\tilde{\mathbf{q}}^{(m)}_i
=
\mathbf{R}_{i,\theta_m}\mathbf{q}^{(m)}_i,
\qquad
\tilde{\mathbf{k}}^{(m)}_j
=
\mathbf{R}_{j,\theta_m}\mathbf{k}^{(m)}_j.
\]
Thus the subspace contribution depends on
\begin{equation}
(\tilde{\mathbf{q}}^{(m)}_i)^\top
\tilde{\mathbf{k}}^{(m)}_j
=
(\mathbf{q}^{(m)}_i)^\top
\mathbf{R}_{j-i,\theta_m}
\mathbf{k}^{(m)}_j.
\label{eq:app_rope_dot}
\end{equation}
Taking expectations and using
\[
\mathbb{E}[\mathbf{x}^\top\mathbf{A}\mathbf{y}]
=
\operatorname{Tr}\!\left(
\mathbf{A}\,\mathbb{E}[\mathbf{y}\mathbf{x}^\top]
\right),
\]
we obtain
\[
\mathbb{E}\!\left[
(\tilde{\mathbf{q}}^{(m)}_i)^\top
\tilde{\mathbf{k}}^{(m)}_j
\right]
=
\operatorname{Tr}\!\left(
\mathbf{R}_{-\delta,\theta_m}\boldsymbol{\Sigma}_m
\right).
\]
For any $2\times2$ matrix $\boldsymbol{\Sigma}$,
\[
\operatorname{Tr}(\mathbf{R}_{-\varphi}\boldsymbol{\Sigma})
=
(\operatorname{Tr}\boldsymbol{\Sigma})\cos\varphi
+
(\boldsymbol{\Sigma}_{21}-\boldsymbol{\Sigma}_{12})\sin\varphi.
\]
Therefore
\begin{equation}
\mathbb{E}\!\left[
(\tilde{\mathbf{q}}^{(m)}_i)^\top
\tilde{\mathbf{k}}^{(m)}_j
\right]
=
\rho_m\cos(\delta\theta_m)
+
c_m\sin(\delta\theta_m).
\label{eq:app_subspace_cos}
\end{equation}
The first term is the contribution of pre-RoPE alignment, while the second
comes from the antisymmetric component. In the continuum
aggregation below, the skew contribution is proportional to
\[
\frac{\mathrm{Si}(\delta)-\mathrm{Si}(\delta/b)}{\ln b},
\]
which is asymptotically constant for $1\ll \delta\ll b$. Hence the skew term does not change the leading logarithmic
decay rate. Aggregating over subspaces and focusing on the distance-dependent alignment
term gives
\begin{equation}
\mathbb{E}[s(i,j)]
=
\frac{1}{\sqrt{d}}
\sum_{m=0}^{N-1}
\rho_m\cos(\delta\theta_m).
\label{eq:app_expected_score_discrete}
\end{equation}
Letting $\rho_{\mathrm{total}}=\sum_{m=0}^{N-1}\rho_m$, write
\begin{equation}
\mathbb{E}[s(i,j)]
=
\frac{\rho_{\mathrm{total}}}{\sqrt{d}}
r_{\mathrm{disc}}(\delta),
\qquad
r_{\mathrm{disc}}(\delta)
=
\frac{
\sum_{m=0}^{N-1}
\rho_m\cos(\delta\theta_m)
}{
\sum_{m=0}^{N-1}
\rho_m
}.
\label{eq:app_retention_disc}
\end{equation}
To obtain \Cref{thm:rope_edge_distortion},
assume $\rho_m\approx\rho_{\mathrm{total}}/N$. Then
\begin{equation}
r_{\mathrm{disc}}(\delta)
\approx
\frac{1}{N}
\sum_{m=0}^{N-1}
\cos(\delta b^{-m/N}).
\label{eq:app_uniform_rho_sum}
\end{equation}
As $N\to\infty$, set $x=m/N$, so $\theta=b^{-x}$ and
$dx=-d\theta/(\theta\ln b)$. Hence
\begin{equation}
\begin{aligned}
r_{\mathrm{disc}}(\delta)
&\to
\int_0^1 \cos(\delta b^{-x})\,dx 
=
\frac{1}{\ln b}
\int_{1/b}^{1}
\frac{\cos(\delta\theta)}{\theta}\,d\theta.
\end{aligned}
\label{eq:app_log_density_integral}
\end{equation}
Thus RoPE induces the log-uniform frequency density
\[
g_{\mathrm{RoPE}}(\theta)=\frac{1}{\theta\ln b},
\qquad \theta\in[1/b,1].
\]
Using the substitution $u=\delta\theta$,
\begin{equation}
r_b(\delta)
=
\frac{1}{\ln b}
\int_{\delta/b}^{\delta}
\frac{\cos u}{u}\,du
=
\frac{\mathrm{Ci}(\delta)-\mathrm{Ci}(\delta/b)}{\ln b},
\label{eq:app_exact_ci}
\end{equation}
where $d\,\mathrm{Ci}(u)/du=\cos u/u$. Combining
\Cref{eq:app_retention_disc} and \Cref{eq:app_exact_ci} gives
\[
\mathbb{E}[s(i,j)]
\approx
\frac{\rho_{\mathrm{total}}}{\sqrt{d}}
\frac{\mathrm{Ci}(\delta)-\mathrm{Ci}(\delta/b)}{\ln b}.
\]
For $1\ll \delta\ll b$, the lower endpoint $\delta/b$ is small, so
\[
\mathrm{Ci}(\delta/b)
\approx
\gamma+\ln(\delta/b).
\]
The omitted terms vanish as $\delta/b$ becomes small. Substituting this into
\Cref{eq:app_exact_ci} gives
\[
r_b(\delta)
\approx
1-\frac{\ln\delta+\gamma}{\ln b}
+
\frac{\mathrm{Ci}(\delta)}{\ln b}.
\]
The remaining $\mathrm{Ci}(\delta)$ term is an oscillatory residual whose
amplitude decays as $\delta$ grows. Since it does not contribute to the smooth
long-range envelope, we drop it and obtain
\begin{equation}
r_b(\delta)
\approx
\bar r_b(\delta)
:=
1-\frac{\ln\delta+\gamma}{\ln b}.
\label{eq:app_log_envelope}
\end{equation}
Reinstating the skew adds the constant
$c_{\mathrm{total}}(\pi/2)/(\sqrt{d}\,\ln b)$, where
$c_{\mathrm{total}}=\sum_m c_m$. This term shifts the expected score but is
independent of $\delta$, so it leaves the logarithmic decay rate unchanged.

\subsection{Bandwidth-Based Attention Bound}
\label{app:bandwidth_bound_proof}

Let $G=(V,E)$ and define its bandwidth as
\begin{equation}
B(G)
=
\min_{\pi:V\to\{1,\ldots,|V|\}}
\max_{(u,v)\in E}
|\pi(u)-\pi(v)|.
\label{eq:app_bandwidth_def}
\end{equation}
Thus, for any node ordering $\pi$, some edge satisfies
$|\pi(u)-\pi(v)|\ge B(G)$.
Assume each node contributes approximately $\tau$ tokens to the prompt. Then
an edge with stretch $|\pi(u)-\pi(v)|$ has separation
\[
\delta \approx \tau |\pi(u)-\pi(v)|.
\]
Therefore some edge has $\delta\gtrsim \tau B(G)$.
Using \Cref{eq:app_log_envelope},
\[
\bar r_b(\delta)
=
1-\frac{\ln\delta+\gamma}{\ln b},
\]
which decreases in $\delta$, the worst-case retention is bounded by
\begin{equation}
\bar r_b^*(G)
\lesssim
\bar r_b(\tau B(G))
=
1
-
\frac{\ln(\tau B(G))+\gamma}{\ln b} = 1
-
\frac{\ln B(G)}{\ln b}
-
\frac{\ln \tau}{\ln b}
-
\frac{\gamma}{\ln b}.
\label{eq:app_bandwidth_bound}
\end{equation}
which decomposes into topology-based, text-based, and bias
terms.

\paragraph{Examples.}
For common graph families on $n$ nodes,
\[
B(P_n)=1,
\qquad
B(C_n)=2,
\qquad
B(S_n)=\left\lceil\frac{n-1}{2}\right\rceil,
\qquad
B(K_n)=n-1.
\]
The star value follows because placing the center in the middle of the ordering minimizes the maximum distance to any leaf, while the complete-graph value follows because the first and last vertices in any ordering are adjacent. Substituting these values into \Cref{eq:app_bandwidth_bound} gives the examples reported in the main text. For instance, with $b=10^4$ and $\tau=50$,
\[
\bar r_b^*(P_n)
\approx
1-\frac{\ln 50+\gamma}{\ln 10^4}
\approx
0.51,
\]
while for a 20-node star, $B(S_{20})=10$, so
\[
\bar r_b^*(S_{20})
\approx
1-\frac{\ln(500)+\gamma}{\ln 10^4}
\approx
0.26.
\]
This illustrates how the same text length can produce substantially different attention retention depending on graph topology.

\section{Dataset Details}
In this section we explain the dataset details for the synthetic experiments, semi-synthetic experiments, and real-world experiments.

%%%%%%%%%%%%%%%%%%%%%%%%%%%%%%%%%%%%%%%%%%%%%%%%%%%%%%%%%%%%%%%%%%%%%%%%

%%%%%%%%%%%%%%%%%%%%%%%%%%%%%%%%%%%%%%%%%%%%%%%%%%%%%%%%%%%%%%%%%%%%%%%%

\subsection{Synthetic Validation Setup}
\label{sec:synth_graphs}

To empirically validate \Cref{thm:rope_edge_distortion}, we construct a controlled study designed to isolate the impact of positional edge stretch ($\delta_{u,v}$) on attention decay, independent of semantics.

\noindent \textbf{Graph Topologies.}
We generate graphs with $N=30$ nodes from three distinct families to capture diverse structural properties:
\begin{itemize}[leftmargin=*]
    \item \textbf{Erdős–Rényi (ER):} Models homogeneous random connectivity. We set the edge probability $p \approx \frac{\ln N + \ln \ln N}{N}$ to ensure graphs are connected with high probability without making them overly dense.
    \item \textbf{Stochastic Block Model (SBM):} We generate graphs with $k=3$ communities using an assortativity ratio of $p_{\mathrm{in}}/p_{\mathrm{out}} = 5.0$, testing the impact of inter-community edges on attention.
    \item \textbf{Barabási–Albert (BA):} Models scale-free networks with preferential attachment. To maintain comparability, we set the attachment parameter $m$ such that the average degree matches that of the ER graphs (i.e., $m \approx \lfloor p(N-1)/2 \rfloor$).
\end{itemize}

\vspace{0.1cm}
\noindent \textbf{Linearization Strategies.}
To vary the stretch distribution $\delta_{u,v}$, we serialize nodes using three orderings:
(1) \textbf{Random}, acting as a high-entropy baseline;
(2) \textbf{Breadth-First Search (BFS)}, which preserves local neighborhoods but incurs large jumps during backtracking; and
(3) \textbf{Reverse Cuthill–McKee (RCM)}, an algorithm explicitly designed to minimize matrix bandwidth. In our context, RCM serves as the theoretical minimizer of maximum edge stretch.

\vspace{0.1cm}
\noindent \textbf{De-biasing via Distance Balancing.}
A naive comparison of orderings is confounded by the fact that efficient orderings (like RCM) naturally produce shorter edge stretches than Random ordering. To decouple the effect of \textit{distance} from the effect of \textit{topology}, we employ a greedy balancing strategy during dataset generation. We sample query-target pairs $(u, v)$ such that the distribution of sequence distances $\delta_{u, v}$ is approximately uniform across all graph types and orderings. This ensures that any decay observed at a specific distance $D$ is attributable to the positional mechanism itself, rather than a lack of samples at that range.

\subsection{Semi-Synthetic and Real-World Datasets}
\label{sec:datasets_setup}

To rigorously evaluate the impact of graph serialization on LLM reasoning, we process three standard benchmarks into a format suitable for language modeling. Our data generation pipeline consists of three stages: sampling, linearization, and prompt construction.

\vspace{0.1cm}
\noindent \textbf{Subgraph Sampling.}
We use 1,000 ego nodes for testing and a disjoint 1,000 nodes for training fine-tuned baselines, with 200 training nodes reserved as GaLA's validation. For each ego node $u$, we extract a 2-hop subgraph and cap fan-out to fit standard LLM context limits: 20 neighbors per hop for semi-synthetic experiments and 5 for real-world experiments, sampling uniformly when nodes exceed this limit. In the real-world setting, we truncate each node's text to 50 words.

\vspace{0.1cm}
\noindent \textbf{Prompts.} \textit{Semi-Synth:}
For the ego node $u$, we mask its label token (replacing it with \texttt{<Masked>}). For all neighbor nodes $v \in \mathcal{N}(u)$, the prompt explicitly states their class label. The model is queried to predict the most common label among the ego node's immediate neighbors. This task forces the model to perform a retrieval-and-count operation over the serialized graph structure.

\begin{figure}[h]
\centering
\begin{tcolorbox}[colback=gray!5!white,colframe=gray!75!black,title=Semi-Synthetic Prompt Structure]
\small
\textbf{[System Message]} \\
You are a graph reasoning expert. Output ONLY the label name.

\vspace{0.2cm}
\textbf{[User Message]} \\
\textbf{\#\#\# Graph Context} \\
Node 1456 has label Theory and neighbors \{ 1095 \}. \\
Node 792 has label Theory and neighbors \{ 798, 1095 \}. \\
... \\
Node 1360 has label \textbf{<Masked>} and neighbors \{ 1095 \}.

\textbf{\#\#\# Task} \\
Query: What is the most common label across the neighbors of Node 1360? Choose one from: [Case Based, ... Theory].
\end{tcolorbox}
\vspace{-0.3cm}
\caption{A semi-synthetic prompt where node features are replaced by labels. The system instruction enforces a concise output format.}
\label{fig:semi_synth_prompt}
\end{figure}

\textit{Real World:}
Each node is associated with its raw text feature string. The model is queried to predict the class label of the ego node.

\begin{figure}[h]
\centering
\begin{tcolorbox}[colback=gray!5!white,colframe=gray!75!black,title=Real-World Prompt Structure]
\small
\textbf{[System Message]} \\
You are a graph reasoning expert. Output ONLY the label name.

\vspace{0.2cm}
\textbf{[User Message]} \\
\textbf{\#\#\# Graph Context} \\
Node 1433 has text Title: Learning Boxes... Abstract: We present exact learning algorithms... and neighbors \{ 1095, 798 \}. \\
... \\
Node 1360 has text Title: Learning From a Consistently Ignorant Teacher... and neighbors \{ 1095 \}.

\textbf{\#\#\# Task} \\
Query: Predict the label of Node 1360 based on its own text and graph neighborhood. Choose one from: [Case Based, ... Theory].
\end{tcolorbox}
\vspace{-0.3cm}
\caption{A real-world prompt using raw text attributes. The model must reason over both local text content and neighbor context, guided by the system instruction.}
\label{fig:real_world_prompt}
\end{figure}

\subsection{Chain-of-Thought (CoT) Prompting}
To evaluate whether explicit reasoning steps can overcome the decay induced by linearization, we introduce a third experimental setting employing Chain-of-Thought (CoT) prompting.
In this scenario, we modify the system instruction to encourage intermediate reasoning steps before the final prediction.
This setup tests the hypothesis that allowing the model to generate intermediate tokens might help it bridge the attention gaps caused by linearization, effectively refreshing the context of distant neighbors before making a decision. An example prompt is given in \Cref{fig:cot_prompt}, with the only difference being the system message.

\begin{figure}[h]
\centering
\begin{tcolorbox}[colback=gray!5!white,colframe=gray!75!black,title=Chain-of-Thought Prompt Structure]
\small
\textbf{[System Message]} \\
You are a graph reasoning expert. First, think step-by-step to analyze the graph structure and neighbors. Finally, state the label clearly starting with 'Answer:'.

\vspace{0.2cm}
\textbf{[User Message]} \\
\textit{(Same Graph Context as Figure~\ref{fig:real_world_prompt})}
\end{tcolorbox}
\vspace{-0.3cm}
\caption{The Chain-of-Thought (CoT) prompt structure. The modified system instruction explicitly prompts the model to generate reasoning traces before the final prediction.}
\label{fig:cot_prompt}
\end{figure}

\subsection{Ego-Node Placement Strategy}
Across GaLA settings, we place the ego node, i.e., the target of the classification query, at the end of the linearized sequence immediately before the query token. This aligns with the causal mask in decoder-only Transformers where a token at position $t$ can attend only to positions $1,\dots,t$, so placing the ego node earlier would prevent its hidden state from incorporating neighbors that appear later. %Placing it last lets the ego representation attend over the full serialized neighborhood, allowing it to act as an aggregation center before prediction, analogous to message passing in GNNs.

%%%%%%%%%%%%%%%%%%%%%%%%%%%%%%%%%%%%%%%%%%%%%%%%%%%%%%%%%%%%%%%%%%%%%%%%

%%%%%%%%%%%%%%%%%%%%%%%%%%%%%%%%%%%%%%%%%%%%%%%%%%%%%%%%%%%%%%%%%%%%%%%%
\section{Additional Experimental Modeling Details}

This section provides additional experimental details.

\subsection{Motivational Study: Controlled Aggregation}
\label{sec:cnat}

\vspace{0.1cm}
\noindent \textbf{Layer/Head Selection.}
We analyze attention only in heads and layers most relevant to semantic aggregation: the top-$k$ heads ($k=25\%$), selected dynamically by pre-RoPE attention strength, and late but non-final transformer layers. For Qwen2.5-3B-Instruct, this corresponds to layer 29, roughly 80\% through the model. 
\subsection{Node Classification}

Our evaluation compares GaLA against a set of baselines designed to test whether standard modeling strategies can mitigate linearization-induced degradation without architectural modification.

\noindent \textbf{Model Backbones.}
All model checkpoints are initialized from their Hugging Face repos.
%, with the explicit Model ID given below:

\begin{itemize}
    \item \textbf{Qwen2.5-3B-Instruct: } Qwen/Qwen2.5-3B-Instruct
    \item \textbf{Ministral-3B: } mistralai/Ministral-3-3B-Instruct-2512
    \item \textbf{Qwen2.5-7B-Instruct: } Qwen/Qwen2.5-7B-Instruct
\end{itemize}

%\noindent \textbf{Baselines and Interventions.}
%We consider two serialization strategies, random ordering and BFS, to isolate the effect of edge stretch induced by linearization alone. We exclude RCM from this main comparison as preliminary experiments showed it consistently underperforms, likely due to the disruption of local semantic neighborhoods despite global bandwidth minimization. To test whether explicit reasoning can compensate for positional decay, we additionally evaluate Chain-of-Thought prompting under two token budgets (256 and 512 tokens). Chain-of-Thought provides an understanding of whether more compute can enable the model to re-attend to the correct neighbors. Finally, as a strong adaptation baseline, we include fine-tuning, serving as an upper bound that reflects what can be achieved through training rather than inference-time adjustments. For the Qwen2.5-3B-Instruct and Ministral-3B we perform full fine-tuning, while for Qwen2.5-7B-Instruct we use Low-Rank Adaptation (LoRA) to maintain computational efficiency.
When fine-tuning, we use the 1,000 node training for one epoch with a learning rate of $1 \times 10^{-5}$, a batch size of $1$, and gradient accumulation of $8$. The max sequence length is set to $2048$. 

\noindent \textbf{GaLA Configuration and Tuning.}
For all GaLA experiments, the scalar bias strength $\beta$ is selected using a small validation set of 200 samples disjoint from the test data. This setup reflects a realistic low-resource calibration scenario and avoids reliance on tuning.

\section{Additional Results and Analyses}
\label{sec:addtl_exp}
Here we provide additional sensitivity studies to further characterize the behavior and robustness of GaLA.
\paragraph{Head-selection strategy.}
\Cref{tab:headsel_ablation} ablates which heads receive the graph bias. We study the following cases: every head (\emph{All}), a \emph{random} subset, the \emph{entropy}-ranked heads, or the \emph{gradient} score. We find that nearly every choice improves over the BFS baseline on the semi-synthetic task, and thus the benefit is not tied to a single heuristic. Furthermore, the label-free entropy rule is already a strong default, and the the gradient score, which reuses the small calibration set, is best or tied throughout and yields the largest gains on the harder PubMed and Arxiv tasks. We see similar, yet slightly smaller gains, on the real-world tasks, similar to that found in the main text.

\begin{table}[t]
    \centering
    \renewcommand{\arraystretch}{1.05}
    \setlength{\tabcolsep}{3.5pt}
    \caption{\textbf{Head-selection ablation.} On Qwen2.5-3B, gradient-based selection performs best or tied on most tasks, while entropy provides a strong label-free alternative.}
    \label{tab:headsel_ablation}
    \vspace{-0.2cm}
    \footnotesize
    \begin{tabular}{l l ccc ccc}
    \toprule
    & & \multicolumn{3}{c}{\textbf{Semi-Synth. Perf.}}
    & \multicolumn{3}{c}{\textbf{Real-World Perf.}} \\
    \cmidrule(lr){3-5} \cmidrule(lr){6-8}
    \textbf{Method}
    & \textbf{Head selection}
    & \textbf{Cora} & \textbf{PubMed} & \textbf{Arxiv}
    & \textbf{Cora} & \textbf{PubMed} & \textbf{Arxiv} \\
    \midrule
    BFS Lin. & --
        & $83.3$ & $90.3$ & $80.3$
        & $65.9$ & $77.5$ & $35.2$ \\
    \midrule
    \multirow{4}{*}{GaLA}
        & All
        & $89.1$ & $91.2$ & $82.4$
        & $67.7$ & $80.1$ & $35.2$ \\
        & Random
        & $87.8$ & $90.9$ & $83.1$
        & $66.8$ & $80.2$ & $37.0$ \\
        & Entropy
        & $89.1$ & $91.5$ & $83.1$
        & $67.0$ & $79.2$ & $36.0$ \\
        & Gradient
        & $90.4$ & $92.0$ & $82.4$
        & $67.4$ & $82.5$ & $36.8$ \\
    \bottomrule
    \end{tabular}
\end{table}

\paragraph{Runtime overhead.}
\Cref{tab:overhead} decomposes GaLA's cost on Qwen2.5-3B. We find that runtime is largely dominated by ordinary LLM inference, i.e. the per-sample shortest-path computation adds at most tens of milliseconds, and the one-time head-selection calibration (entropy or gradient) costs only $\sim$$30$ seconds, which is negligible once amortized over the test set. Consequently, the complete $n{=}1000$ pipeline stays close to baseline inference and orders of magnitude below Chain-of-Thought (\Cref{fig:runtime}), making GaLA practical to deploy.

\begin{table}[t]
    \centering
    \renewcommand{\arraystretch}{1.15}
    \setlength{\tabcolsep}{5.0pt}
    \caption{\textbf{GaLA Runtime Breakdown.} For Qwen2.5-3B, GaLA runtime is dominated by LLM inference. Shortest-path computation adds little per sample, while entropy and gradient calibration are one-time costs that remain small relative to the full $n{=}1000$ pipeline.}
    \label{tab:overhead}
    \vspace{-0.2cm}
    \footnotesize
    \begin{tabular}{l ccc}
    \toprule
    \textbf{Component} & \textbf{Cora} & \textbf{PubMed} & \textbf{Arxiv} \\
    \midrule
    Inference, GaLA (s / data point) 
        & $0.232$ & $0.438$ & $0.822$ \\
    \quad \textit{of which}: shortest-path comp. (s) 
        & $0.001$ & $0.013$ & $0.066$ \\
    \midrule
    \multicolumn{4}{l}{\textit{Pre-inference calibration, one-time (s)}} \\
    \quad Entropy 
        & $30$ & $35$ & $33$ \\
    \quad Gradient 
        & $39$ & $28$ & $29$ \\
    \midrule
    \multicolumn{4}{l}{\textit{Complete pipeline, $n{=}1000$ (s)}} \\
    \quad Entropy 
        & $262$ & $473$ & $855$ \\
    \quad Gradient 
        & $271$ & $466$ & $851$ \\
    \bottomrule
    \end{tabular}
\end{table}

\end{document}